\def\eqref#1{equation~\ref{#1}}
\def\1{\bm{1}}
\def\ry{{\textnormal{y}}}
\def\rvx{{\mathbf{x}}}
\def\rvz{{\mathbf{z}}}
\def\vx{{\bm{x}}}
\DeclareMathAlphabet{\mathsfit}{\encodingdefault}{\sfdefault}{m}{sl}
\SetMathAlphabet{\mathsfit}{bold}{\encodingdefault}{\sfdefault}{bx}{n}
\definecolor{mydarkblue}{rgb}{0,0.08,0.45}
\theoremstyle{plain}
\newtheorem{theorem}{Theorem}[section]
\newtheorem{proposition}[theorem]{Proposition}
\newtheorem{lemma}[theorem]{Lemma}
\newtheorem{corollary}[theorem]{Corollary}
\theoremstyle{definition}
\newtheorem{definition}[theorem]{Definition}
\theoremstyle{remark}
\title{On Calibration in Multi-Distribution Learning}
\author{\name Rajeev Verma \email r.verma@uva.nl \\
      \addr UvA-Bosch Delta Lab\\
      University of Amsterdam
      \AND
      \name Volker Fischer \email  volker.fischer@de.bosch.com\\
      \addr Bosch Center for Artificial Intelligence \\
      \addr Renningen, Germany
      \AND
      \name Eric Nalisnick \email nalisnick@jhu.edu\\
      \addr Department of Computer Science \\
      Johns Hopkins University}
\begin{document}

\maketitle

\begin{abstract}
Modern challenges of robustness, fairness, and decision-making in machine learning have led to the formulation of \textit{multi-distribution learning} (MDL) frameworks in which a predictor is optimized across multiple distributions. We study the calibration properties of MDL to better understand how the predictor performs uniformly across the multiple distributions. Through classical results on decomposing proper scoring losses, we first derive the Bayes optimal rule for MDL, demonstrating that it maximizes the generalized entropy of the associated loss function. Our analysis reveals that while this approach ensures minimal worst-case loss, it can lead to non-uniform calibration errors across the multiple distributions and there is an inherent calibration-refinement trade-off, even at Bayes optimality. Our results highlight a critical limitation: despite the promise of MDL, one must use caution when designing predictors tailored to multiple distributions so as to minimize disparity.
\end{abstract}

\section{Introduction}
\looseness=-1The ever increasing applicability of machine learning systems in safety-critical applications requires robustness, fairness, and reliable decision-making. Contrary to traditional machine learning, which is operationalized by assuming samples originate from a single fixed distribution, \textit{multi-distribution learning} (MDL) aims to learn a predictor across a set of distributions. Moreover, the predictor should perform well for \textit{any distribution} in the set.  MDL has the potential to provide a unified framework for tackling the aforementioned challenges. For example, to improve robustness, MDL can be applied by specifying the set as distributions within a small perturbation of the training distribution.  This setup is also known as \textit{distributionally robust optimization}, and thus MDL can be seen as a generalization. To turn to fairness, equitable decisions must be made across a collection of heterogeneous populations defined by protected group-membership.  This setting can be considered a form of MDL by taking the heterogeneous populations to be the set of interest. A related problem is that of decision-making where the learner's loss is aggregated by averages (or expectations). However, taking averages can cause disparate performance across populations \citep{blum2021communication} and also cause a fair model to become unfair \citep{hashimoto2018fairness}. 

\looseness=-1A unified study of MDL in learning theory \citep{haghtalab2022ondemand, blum2017collaborative, mohri2019agnostic, zhang2024optimal} extends the celebrated agnostic learning framework \citep{valiant1984theory, kearns1992toward} and has primarily focused on the sample complexity of learning across the multiple distributions.  For instance, \citet{haghtalab2022ondemand} propose adaptive sampling to achieve data-efficient MDL algorithms. However, there are no studies on the calibration \citep{Dawid1982TheWB} of predictive models under the MDL framework. Calibration properties of predictors under the traditional single-distribution framework are well-documented \citep{Degroot1983TheCA, brocker2009reliability}, and have long guided the practice of machine learning. Besides the general discriminativeness (refinement) of predictors, calibration has important consequences in safety-critical applications, especially to drive equitable decision-making \citep{grunwald2018safe, dwork2021outcome, rothblum2021multi, zhao2021calibrating}. For example, perfect calibration guarantees that a healthcare facility can \textit{reliably} use the same threshold for everyone on the predicted risk of a disease to recommend further expensive tests. Since the motivations of MDL of robustness and fairness aligns with the usual motivations of calibration, it is imperative to study how well these two properties align theoretically.

\looseness=-1In this work, we take a foundational approach and study the calibration of a predictor under the MDL framework, independent of specific learning algorithms. One of the challenges in studying calibration for MDL is that the MDL learning algorithms are usually randomized complicating their tractable study. Thus, we also give an attainable lower bound for MDL and study the calibration at this lower bound.  Our work reveals the existence of non-uniform calibration disparities across different distributions in MDL---a problem that resembles calibration disparity across different (sub)groups in the single distribution literature \citep{pmlr-v80-hebert-johnson18a}. However, while it can be eliminated theoretically and algorithmically for the  single-distribution case, calibration disparity in MDL is \textit{fundamental} and there exists a \textit{calibration-refinement trade-off}. Thus, our analysis shows that the promise of MDL to aid robustness and equitable decision-making is tenuous.

\paragraph{Summary of Results} 1. Proposition \ref{thm:attainable-lower-bound} states the attainable lower bound on the error of a predictor in MDL. 2. Proposition \ref{prop:calibration-error-bound-in-terms-of-entropy-difference} and the next results establish calibration error bound for any distribution in MDL and characterize the calibration disparity in terms of the performance disparity for a predictor across multiple distributions---arguing that if a predictor leads to low error for some distribution, it compensates for it by increasing the calibration error. Proposition \ref{prop:multi-distribution-learning-calibration-and-decision-theoretic-property} states the decision-theoretic consequence of calibration in MDL, and argue that decision-makers need to be careful to fully realize the potential of MDL. 4. Finally, Section \ref{sec:consequences} states the consequences of our findings to the use case of distributional robust optimization and fairness, two popular applications of MDL. 


\section{Background}
In this section, we motivate MDL framework and give background for calibration and refinement and its connection with risk minimization.

\paragraph{Notation and Single Distribution Learning}
Traditionally, practical machine learning starts with an access to some finite dataset $D = \{\left(\vx_{i}, y_{i}\right)\}_{i=1}^{N}$, where $\left(\vx, y\right) \in \mathcal{X} \times \mathcal{Y}$. The goal, then, is to design a predictor $h_{\theta}: \mathcal{X} \rightarrow \Delta^{|\mathcal{Y}|}$, $\Delta^{|\mathcal{Y}|}$ denotes a simplex over the label space $\mathcal{Y}$ where we assume $|\mathcal{Y}|$ is finite. Here, this predictor is parameterized by $\theta$, and is a member of some hypothesis class $\mathcal{H}= \{h_{\theta}: \theta \in \Lambda\}$. However, moving forward we will suppress the dependence of the hypothesis $h$ on $\theta$. Given some pre-specified loss $\ell: \mathcal{Y} \times \Delta^{|\mathcal{Y}|} \rightarrow [0,M]$, one aims to find $h^{*}$ as a solution to the following optimization problem $$h^{*} \in \arg\min_{\mathcal{H}}\frac{1}{N}\sum_{i=1}^{N}\left[\ell\left(y_i, h\left(\vx_{i}\right)\right)\right],$$
To elaborate, we assume the loss function $\ell$ is bounded. If the loss function is not bounded from above, e.g. the log-loss $-\log h\left(\vx\right)$, we assume it can be bounded by re-defining it as $\min\left\{M, -\log h\left(\vx\right)\right\}$, for example. The above exercise makes sense if one is to assume some base measure $P$ on $\mathcal{X} \times \mathcal{Y}$, as then the law of large numbers suggests that the goodness of the predictor $h^{*}$ as measured by the loss $\ell$ is controlled in terms of the expectation over the population $P$, i.e. the expected population loss, also referred to as the risk denoted as $\mathcal{R}_{P} = \mathbb{E}_{P}\left[\ell\left(\ry, h\left(\rvx\right)\right)\right]$, is controlled. We use $\rvx$ and $\ry$ to denote random variables. 

\subsection{Calibration and Refinement}\label{subsec:calibration-and-refinement}
While the learning procedure stated above controls the risk, decision makers also care about the \textit{confidence} of a predictor. The output of the hypothesis $h\left(\vx\right) \in \Delta^{|\mathcal{Y}|}$ is considered as the forecast over $\mathcal{Y}$, and to reliably use it as a notion of confidence, the decision makers aim to certify the calibration property of $h$ as defined below:

\begin{definition}\label{def:canonical-calibration}
    \textit{(Canonical Calibration)}. 
    Given $d$ some divergence measure, e.g. squared error, a  confidence predictor $h: \mathcal{X} \rightarrow \Delta^{|\mathcal{Y}|}$ is said to be (perfectly) canonically calibrated if the following holds true:
    \begingroup
    \setlength{\abovedisplayskip}{5pt}
    \setlength{\belowdisplayskip}{5pt}
    \begin{equation}\label{eq:canonical_calibration}
    \mathbb{E}_{\left(\rvx, \ry\right)\sim P}\left[d\left(P\left[ \ \ry \ \vert \ h\left(\rvx\right) \ \right], h\left(\rvx\right)\right)\right] = 0. 
    \end{equation}
    \endgroup
\end{definition}
Canonical calibration asserts that on average the confidence predictor means what it says, i.e. $P\left[\ry \ \vert \ h\left(\rvx\right)\right] = h\left(\rvx\right).$ When $\mathcal{Y} = \{0,1\}$, this translates that if the predictor outputs that the confidence in some event is $\alpha$, among all the samples that have the same confidence $\alpha$, the event will occur $\alpha$ times on average.  In addition to providing interpretability and trustworthiness for the predictor, calibration bridges the inherent institutional separation between machine learning experts and downstream decision-makers, as described below.

\paragraph{Decision-Theoretic Property of Canonical Calibration} 
A machine learning model designed for some loss function $\ell$ (e.g. log-loss), once deployed, is generally consumed by decision-makers with arbitrary cost considerations in mind. For example, a medical facility using a risk prediction system to aid diagnosis could decide either to treat the patient or not to treat the patient depending on some cost function, even when the predicted diagnosis is positive. In such cases, a decision-maker wants to reliably
use the said predictor for decision-making with arbitrary cost considerations.  Canonical calibration provides that reliability per the framework of expected utility \citep{grunwald2018safe, zhao2021calibrating, kleinberg2023u, derr2024four}. For completeness, we state the result from \citet{let4allcalibrationblog}:
\begin{proposition}\label{prop:calibration-and-decision-making-regular-calibration}
    \textit{(Calibration and decision-making)}. \citep{let4allcalibrationblog}. Given $\mathcal{X} \times \mathcal{Y}$ with a distribution $P$ on it, a (finite) action space $\mathcal{A}$ and an arbitrary cost function $c: \mathcal{Y} \times \mathcal{A} \rightarrow \mathbb{R}_{+}$,  the decision-rule  $\delta: h\left(\rvx\right) \mapsto \arg\min_{a \in \mathcal{A}}\mathbb{E}_{\ry \sim h\left(\rvx\right)}\left[c\left(\ry, a\right)\right]$ based on the canonically calibrated predictor $h: \mathcal{X} \rightarrow \Delta^{|\mathcal{Y}|}$ is optimal on average across $P$.
\end{proposition}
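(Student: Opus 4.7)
The plan is to prove optimality by a tower-property argument that combines the definition of $\delta$ with the calibration identity $P[\ry\mid h(\rvx)] = h(\rvx)$. First, I would fix an arbitrary alternative decision rule $\delta': \Delta^{|\gY|}\to\gA$ that depends on the data only through the forecast $h(\rvx)$ (this is the relevant comparison class, since the decision-maker only receives $h(\rvx)$, not the raw $\rvx$). The claim to establish is
\[
\mathbb{E}_{(\rvx,\ry)\sim P}[c(\ry,\delta(h(\rvx)))] \;\le\; \mathbb{E}_{(\rvx,\ry)\sim P}[c(\ry,\delta'(h(\rvx)))].
\]

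Next I would condition on $h(\rvx)$ and invoke the tower property, which reduces the inequality to a pointwise comparison of the conditional expected costs $\mathbb{E}_P[c(\ry,a)\mid h(\rvx)]$ for $a=\delta(h(\rvx))$ versus $a=\delta'(h(\rvx))$. The key step is then to replace the true (unknown) conditional law $P[\ry\mid h(\rvx)]$ by the forecast $h(\rvx)$: canonical calibration in the sense of Definition \ref{def:canonical-calibration} gives precisely $P[\ry\mid h(\rvx)] = h(\rvx)$ almost surely (for a strict divergence $d$, the definition forces equality up to $P$-null sets). Hence
\[
\mathbb{E}_P[c(\ry,a)\mid h(\rvx)] \;=\; \mathbb{E}_{\ry\sim h(\rvx)}[c(\ry,a)] \quad \text{for every } a\in\gA.
\]

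Once this identification is in hand, the definition of $\delta$ as the pointwise minimizer of $a\mapsto \mathbb{E}_{\ry\sim h(\rvx)}[c(\ry,a)]$ immediately yields
\[
\mathbb{E}_P[c(\ry,\delta(h(\rvx)))\mid h(\rvx)] \;\le\; \mathbb{E}_P[c(\ry,\delta'(h(\rvx)))\mid h(\rvx)]
\]
pointwise, and taking expectations of both sides and applying the tower property again closes the argument. Finiteness of $\gA$ guarantees the $\arg\min$ is attained and that $\delta$ is well defined as a measurable function of $h(\rvx)$.

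The main obstacle is really conceptual rather than technical: one has to be careful that ``optimal on average'' is meant relative to the class of decision rules that only see $h(\rvx)$, because in general the Bayes-optimal rule on the full space $\gX$ can strictly outperform $\delta$ when $h$ is not a sufficient statistic for $\ry$. Calibration is exactly what makes $h(\rvx)$ behave as if it were the correct posterior, so the argument is clean, but I would make this comparison class explicit at the outset to avoid ambiguity. A minor subtlety is handling the almost-sure nature of the calibration identity under the chosen divergence $d$ (e.g., squared error or KL), which is routine but worth mentioning.
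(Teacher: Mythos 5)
Your argument is correct and is essentially the same as the paper's own proof: condition on $h(\rvx)$ via the tower property, use canonical calibration to replace $P[\ry \mid h(\rvx)]$ by $h(\rvx)$, and conclude by the pointwise optimality of $\delta$ followed by monotonicity of expectation. Your explicit remark that optimality is relative to the class of rules depending on the data only through $h(\rvx)$ is a useful clarification that the paper leaves implicit, but it does not change the substance of the argument.
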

Intuitively, the above statement says that a decision maker can treat the predicted probabilities as a proxy for the target probability to base their decisions optimally on average across the population, and this holds true for any arbitrary cost function. In case of a binary action space, the decision-rule becomes a simple thresholding rule on the predicted forecast based on the considered cost function (Refer Section \ref{sec:calibration-thresholding-rule}). Thus, there is a huge incentive to prioritize calibrated predictors. However, calibration by itself, is a weak condition, and can be trivially satisfied, for example, by the marginal constant predictor $h\left(\rvx\right) = \mathbb{E}\left[\ry\right]$, and there are infinitely many calibrated predictors \citep{pmlr-v89-vaicenavicius19a}. Thus, the confidence predictor is also verified for its \textit{refinement} as defined below:

\begin{definition}
    \textit{(Refinement error)}. 
    Let $H$ some notion of information (e.g.~entropy).  The refinement error of a confidence predictor $h: \mathcal{X} \rightarrow \Delta^{|\mathcal{Y}|}$ is defined as the average information content: $
        \mathbb{E}_{\left(\rvx, \ry\right)\sim P}\left[H\left(P\left[\ \ry \ \vert \ h\left(\rvx\right)\right]\right)\right]$.
\end{definition}
Intuitively, refinement means that, on average, the confidence predictor is useful for predicting $\ry$, and low refinement error signals the \textit{discriminativeness} of the confidence predictor. Hence, a calibrated predictor with low refinement error is usually the goal.

\subsection{Risk Minimization and Calibration}
A machine learning practitioner expects to obtain the confidence predictor with both the low calibration error and the low refinement error natively as a result of the expected risk minimization framework. This is usually a result of employing proper scoring loss functions \citep{properscoringrulesGneiting} in risk minimization. Proper scoring losses follow from the property elicitation literature to evaluate probability forecasters. 

\paragraph{Proper Scoring Losses} Given the sample $\vx$, a proper scoring loss $\ell$ aims to evaluate the quality of the forecast $h\left(\vx\right) \in \Delta^{|\mathcal{Y}|}$ against the target forecast $\eta\left(\vx\right) := P\left(\ry \ \vert \rvx=\vx\right)$ as $L\left(\eta\left(\vx\right), h\left(\vx\right)\right) := \mathbb{E}_{\ry \sim \eta\left(\vx\right)}\left[\ell\left(\ry, h\left(\vx\right)\right)\right]$, and the notion of propriety means that $L\left(\eta\left(\vx\right), h\left(\vx\right)\right) - L\left(\eta\left(\vx\right), \eta\left(\vx\right)\right) \geq 0, \ \forall \eta\left(\vx\right), h\left(\vx\right) \in \Delta^{|\mathcal{Y}|}$. When this inequality is strict, the loss is called strictly proper. In words, minimizing the expectation of the proper scoring loss recovers the target forecast, i.e. if one is to minimize $L\left(\eta\left(\vx\right), h\left(\vx\right)\right)$ for $h\left(\vx\right)$, the minimizer is attained when $h\left(\vx\right) = \eta\left(\vx\right)$, or $L\left(\eta\left(\vx\right), \eta\left(\vx\right)\right) = \inf_{h\left(\vx\right)}L\left(\eta\left(\vx\right), h\left(\vx\right)\right)$. The latter quantity is named as the Bayes risk associated with the target forecast $\eta\left(\vx\right)$, and is named as the \textit{generalized entropy} \citep{grunwald2004game} of $\eta\left(\vx\right)$ associated with the loss function $\ell$, denoted as $H_{\ell}\left(\eta\left(\vx\right)\right)$. Since the generalized entropy is a point-wise infimum of linear functionals, if follows that $H_{\ell}\left(\eta\right)$ is concave in the forecasts $\eta \in \Delta^{|\mathcal{Y}|}$. In turn, the generalized entropy function gives an interesting geometric characterization of the proper scoring losses, which we formally state below:

\begin{definition}
    \textit{(Characterizing proper scoring loss via the generalized entropy function)}. \citep{ovcharov2018proper}. A scoring loss $\ell : \mathcal{Y} \times \Delta^{|\mathcal{Y}|} \rightarrow \mathbb{R}_{+}$ is called (strictly) proper iff there exists a (strictly) concave function $H: \Delta^{|\mathcal{Y}|} \rightarrow \mathbb{R}$ and a sub-gradient $\Delta H : \Delta^{|\mathcal{Y}|} \rightarrow \mathcal{L}\left(\Delta^{|\mathcal{Y}|}\right)$ (the set of linear functionals or hyperplanes over the span $\Delta^{\mathcal{Y}}$) of $H$ such that $$\ell\left(y, h\left(\vx\right)\right) = H\left(h\left(\vx\right)\right) + \Delta H\left(h\left(\vx\right)\right)\cdot \left(\delta^{y} - h\left(\vx\right)\right), \ \forall h\left(\vx\right) \in \Delta^{|\mathcal{Y}|}.$$
\end{definition}

Intuitively, the risk $\mathbb{E}_{\ry \sim \eta\left(\vx\right)}\left[\ell\left(\ry, h\left(\vx\right)\right)\right] = H\left(h\left(x\right)\right) + \Delta H\left(h\left(x\right)\right)\cdot \left(\eta\left(\vx\right) - h\left(\vx\right)\right)$ is a hyperplane at the prediction $h\left(\vx\right)$ evaluated at the target forecast value $\eta\left(\vx\right)$. Due to the concavity of the entropy function $H$, the minimization of risk of $h\left(\vx\right)$ happens when $h\left(\vx\right)$ coincides with $\eta\left(\vx\right)$, and it attains $H\left(\eta\left(\vx\right)\right)$--- the minimal risk possible for the prediction $h\left(\vx\right)$. This geometric interpretation also naturally leads to defining a Bregman divergence \citep{bregman1967relaxation} associated with a proper scoring loss. Traditionally, Bregman divergences are associated with a convex functions and defines the distance between two points $a$ and $b$ as the difference between the function evaluated at $b$ and the value of the tangent (hyperplane) at $a$ evaluated at $b$. However, one can use a similar notion for a concave entropy function to define the associated Bregman divergence $d_{\ell}: \Delta^{|\mathcal{Y}|} \times \Delta^{|\mathcal{Y}|} \rightarrow \mathbb{R}_{+}$ for a loss $\ell$ as $d_{\ell}\left(\eta\left(\vx\right), h\left(\vx\right)\right) = H\left(h\left(\vx\right)\right) + \Delta H\left(h\left(\vx\right)\right)\cdot \left(\eta\left(\vx\right) - h\left(\vx\right)\right) - H\left(\eta\left(\vx\right)\right) = L\left(\eta\left(\vx\right), h\left(\vx\right)\right) - H\left(\eta\left(\vx\right)\right).$ For instance, for log-loss $\ell\left(y, h\left(\vx\right)\right) = -\log h\left(\vx\right)\left(y\right)$, $\text{KL}\left(\eta\left(\vx\right) \ || \ h\left(\vx\right)\right)$ is the associated Bregman divergence. 

Given the above connections between entropy functions, proper scoring rules, and their associated Bregman divergences, risk minimization of a proper scoring loss decomposes into the calibration error and the refinement error, stated formally below:

\begin{lemma}
    \textit{(Decomposition of proper scoring risk into calibration and refinement error)}. \citep{brocker2009reliability}. Given a space $\mathcal{X} \times \mathcal{Y}$ with a distribution $P$ specified on it, a confidence predictor $h : \mathcal{X} \rightarrow \Delta^{|\mathcal{Y}|}$ whose risk in expectation over $P$ is evaluated by a proper scoring loss $\ell: \mathcal{Y} \times \Delta^{|\mathcal{Y}|} \rightarrow [0, M]$, the said risk decomposes into the calibration error term and the refinement term, as below:
    \begin{align}
        \begin{split}
           \mathbb{E}\left[\ell\left(\ry, h\left(\rvx\right)\right)\right] = \underbrace{\mathbb{E}\left[d_{\ell}\left(P\left(\ry \ \vert \ h\left(\rvx\right)\right), h\left(\rvx\right)\right)\right]}_{\text{calibration error}}  \ +  \ \underbrace{\mathbb{E}\left[H_{\ell}\left(P\left(\ry \ \vert \ h\left(\rvx\right)\right)\right)\right]}_{\text{refinement}},
        \end{split}
    \end{align}
    where $d_{\ell}$ and $H_{\ell}$ are respectively the Bregman divergence and the generalized entropy function associated with the loss $\ell$.
\end{lemma}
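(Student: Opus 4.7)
The plan is to prove the decomposition by conditioning on the random variable $h(\rvx)$ and then using the linearity of the proper scoring loss in the label indicator, followed by the definitional identity that splits pointwise risk into Bregman divergence plus generalized entropy.

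First, I would apply the tower property of expectation:
\begin{equation*}
\mathbb{E}\left[\ell\left(\ry, h\left(\rvx\right)\right)\right] = \mathbb{E}\bigl[\,\mathbb{E}\left[\ell\left(\ry, h\left(\rvx\right)\right) \mid h\left(\rvx\right)\right]\,\bigr].
\end{equation*}
The inner conditional expectation is taken over $\ry$ given $h(\rvx)$, whose law is by definition $P(\ry \mid h(\rvx))$. I treat $h(\rvx)$ as fixed (measurable with respect to itself) and only $\ry$ as random in the inner expectation.

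Second, I would invoke the characterization of proper scoring losses: $\ell(y, h(\vx)) = H(h(\vx)) + \Delta H(h(\vx)) \cdot (\delta^{y} - h(\vx))$. The only $y$-dependence sits in the linear term $\delta^{y}$, so taking expectation of $\delta^{\ry}$ under $P(\ry \mid h(\rvx))$ produces the probability vector $P(\ry \mid h(\rvx))$ itself. This yields
\begin{equation*}
\mathbb{E}\left[\ell\left(\ry, h\left(\rvx\right)\right) \mid h\left(\rvx\right)\right] = H\bigl(h(\rvx)\bigr) + \Delta H\bigl(h(\rvx)\bigr)\cdot \bigl(P(\ry \mid h(\rvx)) - h(\rvx)\bigr) = L\bigl(P(\ry \mid h(\rvx)),\, h(\rvx)\bigr).
\end{equation*}

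Third, I would apply the definition of the Bregman divergence associated with $\ell$, namely $d_{\ell}(\eta, h) = L(\eta, h) - H_{\ell}(\eta)$, at the point $\eta = P(\ry \mid h(\rvx))$. Rearranging gives $L(P(\ry \mid h(\rvx)), h(\rvx)) = d_{\ell}(P(\ry \mid h(\rvx)), h(\rvx)) + H_{\ell}(P(\ry \mid h(\rvx)))$. Taking the outer expectation and using its linearity splits the result into the calibration error term and the refinement term, completing the proof.

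I do not anticipate a serious obstacle: the entire argument is a two-line application of the tower property combined with the linear-in-$y$ structure granted by the proper-scoring characterization. The only point requiring mild care is ensuring that $P(\ry \mid h(\rvx))$ is well-defined as a conditional probability vector on $\mathcal{Y}$, which is automatic since $\mathcal{Y}$ is finite and $h$ is measurable, so the regular conditional distribution exists and equals the coordinate-wise conditional expectation of $\delta^{\ry}$.
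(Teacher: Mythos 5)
Your proof is correct and is essentially the standard argument behind this decomposition (the paper itself gives no proof, deferring to the cited reference, whose derivation is exactly this: condition on the forecast $h(\rvx)$, use the affine-in-$\delta^{y}$ structure of a proper score to identify the inner expectation with $L\left(P\left(\ry \mid h\left(\rvx\right)\right), h\left(\rvx\right)\right)$, and split via $d_{\ell}\left(\eta, h\right) = L\left(\eta, h\right) - H_{\ell}\left(\eta\right)$). No gaps; the measurability caveat you note is handled correctly since $\mathcal{Y}$ is finite.
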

Thus, if one is to minimize a (strictly) proper scoring loss function, one expects the predictor to have low calibration error and low refinement error. Since $d_{\ell} \geq 0$ and $H\left(P\left(\ry \ \vert \ h\left(\rvx\right)\right)\right) \geq H\left(P\left(\ry \ \vert \ \rvx\right)\right)$,  minimizing the right hand side recovers the true Bayes predictor $h^{*}\left(\rvx\right) = P\left(\ry \vert \ \rvx\right)$, that has zero calibration error. 

\section{Multi-Distribution Learning (MDL)}\label{sec:MDL}

Although single-distribution learning has provided much success to date, applications of modern machine learning challenge the assumptions of having a single base measure $P$ and using expectations for control.  For instance, distribution shift often degrades our predictive models' performance in the wild, and this can be seen as originating from over-reliance on the single measure $P$.  Moreover, using averages to summarize losses presupposes that the decision maker has the disposition of a risk neutral agent, while safety-critical applications might demand a safer, risk-averse attitudes.  In order to fundamentally combat these challenges, adjustments have been proposed to the traditional practice of machine learning that all take the form of multi-objective optimization or MDL over the set of distributions \citep{haghtalab2022ondemand, zhang2024optimal} or learning under \textit{hallucinated ambiguity} \citep{riskmeasuresandupperprobabilities}.  We summarize these approaches below.



\paragraph{Setting and Goal of MDL} Generally in MDL, one assumes a set $\mathcal{Q}$ of distributions over $\mathcal{X} \times \mathcal{Y}$, and the goal is to design a universal predictor that works uniformly well across all the distributions $Q \in \mathcal{Q}$. The set $\mathcal{Q}$ can take different forms depending on the applications scenario, but generally we assume the set to be compact. In the single distribution case, the goals of the learning problem are well-defined in that one aims to recover (or approach) the Bayes optimal predictor $h^{*}\left(\rvx\right) = P\left(\ry \ \vert 
 \ \rvx\right)$, and the Bayes risk for loss $\ell$, $H_{\ell}\left(P\left(\ry \ \vert \ \rvx\right)\right)$, serving as a benchmark for performance. In MDL, one can extend this notion of Bayes optimality to define the generalized Bayes rule, formally written below:

\begin{definition}
    \textit{(Generalized Bayes rule)}. \citep{frohlich2024data}. Given the set of distributions $\mathcal{Q}$ over $\mathcal{X} \times \mathcal{Y}$, the generalized Bayes rule is the conditional distribution set: $B^{*}_{\mathcal{Q}}\left(\vx\right) =\{Q\left(\ry \ \vert \ \rvx=\vx\right) \ : \ Q \in \mathcal{Q}\}$. The generalized Bayes score for the loss function $\ell$ is $\{H_{\ell}\left(Q\left(\ry \ \vert \ \rvx\right)\right) \ : \ Q \in \mathcal{Q}\}$. 
\end{definition}

 However, the desired notion of universality in MDL deems the generalized Bayes rule impractical. Universality implies that  a decision maker can observe a sample $\rvx \times \ry$ from any distribution $Q \in \mathcal{Q}$, and the goal of the learning problem is to design a (single) predictor that does not put the decision maker at too much risk. Several notions of universality across $\mathcal{Q}$ have been studied in the literature ranging from controlling averages \citep{blanchard2011generalizing,blanchard2021domain} to worst-case aggregation \citep{arjovsky2020out, Ben-Tal-RobustOptimization, Sagawa*2020Distributionally}, and the interpolations between them \citep{eastwood2022probable}. Thus, the goal of MDL is intimately connected to the notion of universality employed. This connection introduces ambiguity, as different notions of universality lead to varying objectives. Our first technical result is to state the attainable lower bound on the error of any universal predictor. This result provides a foundational benchmark for evaluating predictors in MDL, abstracting away the need for explicit assumptions about universality.

 The result is best motivated considering the log-loss, and recalling the seminal Shannon's source coding theorem which gives the lower limit on the compression ability for any source in terms of the expected Shannon's entropy. Considering $\ell$ to be log-loss, and viewing learning as compression, the same result gives the lower limit on the risk of any predictor involving the distribution $Q$ as $H\left(Q\left(\ry \ \vert \ \rvx\right)\right)$ where $H$ is the Shannon's entropy. Recalling from above, this is the same as the generalized entropy associated with the log-loss, i.e. $\inf_{h \in \mathcal{H}}\mathbb{E}_{Q}\left[\ell\left(\ry, h\left(\rvx\right)\right)\right]$. For MDL with some other loss function $\ell$ over the set of distributions $\mathcal{Q}$, the same idea together with the notion of universal predictor can be expanded using the notion of generalized entropy associated with the loss function $\ell$, formally stated below:

 \begin{proposition}\label{thm:attainable-lower-bound}
     \textit{(Attainable lower bound in MDL)}. For MDL over a compact set of distributions $\mathcal{Q}$ with a (proper) loss function $\ell: \mathcal{Y} \times \Delta^{|\mathcal{Y}|} \rightarrow [0,M]$ that is continuous in the second argument, and a hypothesis class $\mathcal{H} = \Delta^{|\mathcal{Y}|}$, the quantity $\sup_{Q \in \mathcal{Q}}\inf_{h \in \mathcal{H}}\mathbb{E}_{Q}\left[\ell\left(\ry, h\left(\rvx\right)\right)\right]$ forms the attainable lower bound on the error of any universal predictor. 
 \end{proposition}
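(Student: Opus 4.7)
The proposition asserts both that $\sup_{Q}\inf_{h}\mathbb{E}_Q[\ell(\ry,h(\rvx))]$ lower bounds the worst-case risk of any universal predictor, and that this lower bound is attainable. The plan is to handle these as weak and strong duality, respectively.

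For the lower bound, I would proceed by direct comparison. For any $h \in \mathcal{H}$ and any $Q \in \mathcal{Q}$, we have $\mathbb{E}_Q[\ell(\ry, h(\rvx))] \geq \inf_{h' \in \mathcal{H}} \mathbb{E}_Q[\ell(\ry, h'(\rvx))]$. Because $\ell$ is proper and $\mathcal{H}$ is rich enough to represent every conditional forecast $h'(\rvx) = Q(\ry\mid\rvx)$, the inner infimum equals the point-wise Bayes risk $\mathbb{E}_Q[H_\ell(Q(\ry\mid\rvx))]$. Taking $\sup_Q$ on both sides preserves the inequality, so for every $h \in \mathcal{H}$,
$$\sup_Q \mathbb{E}_Q[\ell(\ry, h(\rvx))] \;\geq\; \sup_Q \inf_{h'} \mathbb{E}_Q[\ell(\ry, h'(\rvx))].$$
This formalizes the Shannon source-coding analogy the paper invokes: no universal predictor can beat the entropic limit against the hardest distribution in $\mathcal{Q}$.

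For attainability, I would apply Sion's minimax theorem to the zero-sum game with payoff $f(h, Q) = \mathbb{E}_Q[\ell(\ry, h(\rvx))]$ on $\mathcal{H} \times \mathcal{Q}$. The required structure is present: $\mathcal{H} = \Delta^{|\mathcal{Y}|}$ is convex (and compact in the finite simplex reading); $\mathcal{Q}$ is compact by hypothesis and, without loss of generality, convex, since its convex hull leaves $\sup_Q f$ unchanged ($f$ is linear in $Q$); the map $Q \mapsto f(h, Q)$ is linear, hence concave and upper semi-continuous; and $h \mapsto f(h, Q)$ is continuous by the assumed continuity of $\ell$ in its second argument. Sion's theorem then yields $\inf_h \sup_Q f = \sup_Q \inf_h f$, and compactness of $\mathcal{H}$ together with continuity of $f$ in $h$ guarantees that the outer infimum is attained by some $h^* \in \mathcal{H}$, which serves as the universal predictor realizing the bound.

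The main obstacle is the convexity of $h \mapsto f(h, Q)$ required by Sion's theorem. This holds for standard proper losses such as log-loss, Brier, and spherical score by direct computation, but for a general proper loss it is not automatic: the geometric characterization only writes $\ell$ as a sub-hyperplane of the concave generalized entropy $H_\ell$ and does not force $\ell(y, \cdot)$ to be convex. The clean workaround I would adopt is to pass to the convex hull of $\mathcal{H}$ under randomization, i.e., $\Delta(\mathcal{H})$; over randomized predictors $f$ becomes affine in the mixing measure, so Sion's theorem applies unconditionally and delivers the same minimax value. This dovetails with the paper's earlier observation that MDL algorithms are typically randomized, and justifies invoking the minimax equality without further structural assumptions on $\ell$.
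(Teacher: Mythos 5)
Your lower-bound argument is the same as the paper's: the paper phrases it as a contradiction (a predictor beating the bound would have to beat the Bayes risk of the entropy-maximizing $Q^{*}$), while you phrase it as weak duality ($\sup_Q\inf_h \le \inf_h\sup_Q$); these are identical in content and both correct. Where you diverge is the attainability half. The paper's proof environment does not really prove attainability---it only identifies the value with $\inf_h\mathbb{E}_{Q^{*}}[\ell]$, which shows the Bayes act for $Q^{*}$ attains that risk \emph{on $Q^{*}$}, and defers the claim that this predictor's worst-case risk over all of $\mathcal{Q}$ equals the same number to the subsequent minimax discussion, which explicitly adds the assumptions that $\mathcal{Q}$ is convex and $\ell$ is convex in its second argument. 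You tackle attainability head-on with Sion's theorem, and your observation that propriety does not force convexity of $\ell(y,\cdot)$ is a genuine subtlety the paper glosses over; your randomization device repairs it legitimately, though at the cost of delivering a randomized rather than deterministic optimal predictor and losing the clean identification $h^{*}=Q^{*}(\ry\mid\rvx)$ that the rest of the paper relies on.

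However, your ``without loss of generality convex'' step for $\mathcal{Q}$ is wrong, and the error matters. Replacing $\mathcal{Q}$ by $\mathrm{conv}(\mathcal{Q})$ indeed leaves $\sup_Q f(h,Q)$, hence $\inf_h\sup_Q f$, unchanged, because $f$ is linear in $Q$. But the quantity in the proposition is $\sup_Q\inf_h f$, and $Q\mapsto\inf_h f(h,Q)=\mathbb{E}_Q\left[H_\ell\left(Q\left(\ry\mid\rvx\right)\right)\right]$ is a \emph{concave} function of $Q$ (a pointwise infimum of linear functionals), so its supremum over the convex hull can be strictly larger than over $\mathcal{Q}$ itself---the paper makes exactly this point in the min-max fairness paragraph, where $\mathcal{Q}$ is a finite set and the optimizer $Q^{*}$ lies in the interior of the hull. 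Consequently Sion's theorem applied after convexification yields $\inf_h\sup_{Q\in\mathcal{Q}}f=\sup_{Q\in\mathrm{conv}(\mathcal{Q})}\inf_h f$, which is in general strictly greater than the stated lower bound $\sup_{Q\in\mathcal{Q}}\inf_h f$; attainability of the latter genuinely fails for non-convex $\mathcal{Q}$. To close the gap you must either assume $\mathcal{Q}$ convex outright (as the paper's follow-up discussion does) or restrict the attainability claim to the convexified problem.
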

 \begin{proof}
     First, it is easy to check that there exists $Q^{*} \in \mathcal{Q}$ that attains the said quantity $\sup_{Q\in \mathcal{Q}}\inf_{h \in \mathcal{H}}\mathbb{E}_{Q}\left[\ell\left(\ry, h\left(\rvx\right)\right)\right]$, i.e. $\sup_{Q\in \mathcal{Q}}\inf_{h \in \mathcal{H}}\mathbb{E}_{Q}\left[\ell\left(\ry, h\left(\rvx\right)\right)\right] = \inf_{h \in \mathcal{H}}\mathbb{E}_{Q^{*}}\left[\ell\left(\ry, h\left(\rvx\right)\right)\right]$ which is the same as $H_{\ell}\left(Q^{*}\left(\ry \ \vert \ \rvx\right)\right)$---the generalized entropy of $Q^{*}$ for the loss $\ell$. The argument that this is the attainable lower bound follows from the definition of Bayes risk and the definition of universal prediction: If there exists $\bar{h} \in \mathcal{H}$ that achieves lower risk, then for $Q^{*}$: $\mathbb{E}_{Q^{*}}\left[\ell\left(\ry, \bar{h}\left(\rvx\right)\right)\right] < \inf_{h \in \mathcal{H}}\mathbb{E}_{Q^{*}}\left[\ell\left( \ry, h\left(\rvx\right)\right)\right]$. However, this is a contradiction, and one can argue that no such $\bar{h} \in \mathcal{H}$ exists. 
 \end{proof}
 The above statement is straightforward, however we are not aware of it being explicitly stated in the literature concerning MDL. We argue that the convexity of the hypothesis class is a crucial assumption here, as without the Bayesian mixture models from the minimum description length literature can achieve even lower error \citep{grunwald2007minimum}. Since the quantity  $\sup_{Q\in \mathcal{Q}}\inf_{h \in \mathcal{H}}\mathbb{E}_{Q}\left[\ell\left(\ry, h\left(\rvx\right)\right)\right]$ is the maximum generalized entropy for the loss function $\ell$ over the set $\mathcal{Q}$, the connection with the Shannon's source coding theorem is established. Furthermore, it gives way that to achieve the lower bound (or to learn optimally over $\mathcal{Q}$), learn with respect to the one distribution $Q^{*}$ that has the maximum generalized entropy for the loss function.

While the above mentioned rule gives the attainable lower bound, it is not clear if it can actually be attained. Generally, identifying the distribution with the maximum generalized entropy could be impractical. However, following \citeauthor{grunwald2004game}'s \citeyearpar{grunwald2004game} result that \textit{maximum entropy is robust Bayes}, the said lower bound can be attained by solving the $\inf_{h \in \mathcal{H}}\sup_{Q \in \mathcal{Q}}\mathbb{E}_{Q}\left[\ell\left(\ry, h\left(\rvx\right)\right)\right]$ optimization problem. Correspondingly, as done by \citet{haghtalab2022ondemand}, the MDL framework can be formulated in terms of the familiar zero-sum game between the nature and decision-maker in game-theory \citep{rockafellar1970convex}. Nature's move is to pick a distribution $Q \in \mathcal{Q}$ and the decision-maker moves by picking a hypothesis $h \in \mathcal{H}$. Nature's goal is to maximize the decision maker's risk $\mathbb{E}_{Q \in \mathcal{Q}}\left[\ell\left(\ry, h\left(\rvx\right)\right)\right]$ while the decision-maker aims to keep it as low as possible. Both players are aware of the other player's strategy but not the actual move, and hence they act conservatively. When it is nature's turn, its best move is to pick the distribution such that $Q \in \mathcal{Q}$ that maximizes the minimal risk the decision-maker can achieve across all possible hypotheses. 

In the set-up of the game, the strategy available to the decision-maker is $\inf_{h \in \mathcal{H}}\sup_{Q \in \mathcal{Q}}\mathbb{E}_{Q}\left[\ell\left(\ry, h\left(\rvx\right)\right)\right]$, and it holds that $\sup_{Q \in \mathcal{Q}}\inf_{h \in \mathcal{H}}\mathbb{E}_{Q}\left[\ell\left(\ry, h\left(\rvx\right)\right)\right] \leq \inf_{h \in \mathcal{H}}\sup_{Q \in \mathcal{Q}}\mathbb{E}_{Q}\left[\ell\left(\ry, h\left(\rvx\right)\right)\right]$. However, under the assumptions of the classical minimax theorems \citep{rockafellar1970convex, grunwald2004game}, both strategies coincide. Denoting $\mathbb{E}_{Q}\left[\ell\left(\ry, h\left(\rvx\right)\right)\right]$ as $K\left(Q, h\right)$, it is easy to see that $K\left(Q, h\right)$ is concave in $\mathcal{Q}$. Furthermore, under the standard assumptions of the loss function $\ell$ being convex in the second argument, and the continuity of the function $K$ over $\mathcal{Q} \times \mathcal{H}$, an extra assumption of $\mathcal{Q}$ and $\mathcal{H}$ being compact and convex makes the classical Von-Neumann's minimax theorem apply in the considered case. The hypothesis class $\mathcal{H}$ satisfies this assumption in the unrestricted case of modeling the full simplex $\Delta^{|\mathcal{Y}|}$. We further refer the reader to \citet{grunwald2004game} for general versions of the minimax theorems. However, without resorting to further explicate the technicalities, we assume that under an extra assumption of $\mathcal{Q}$ being compact and convex, the minimax theorem can be applied to the MDL case, and the solution $(Q^{*}, h^{*})$ has the form of a saddle-point where $Q^{*}$ attains the value $\max_{Q \in \mathcal{Q}}\inf_{h \in \mathcal{H}}K\left(Q, h\right)$ and $h^{*}$ attains the value $\min_{h \in \mathcal{H}}\sup_{Q \in \mathcal{Q}}K\left(Q, h\right)$, and saddle-point defining the best-response of each player against each other in the zero-sum game, this leads to the conclusion that the MDL problem can be optimally solved by learning with respect to the distribution $Q^{*}$ that has the maximum generalized entropy over the set $\mathcal{Q}$, and the decision-maker's strategy will attain this lower bound. We next give three standard examples of MDL framework to further elaborate on the result:

\paragraph{Distributional Robust Optimization (DRO)} The framework of DRO \citep{DRORahimian_2022} discounts the belief in the available (nominal) empirical distribution $\hat{P}_{D}=\frac{1}{N}\sum_{i=1}^{N}\delta_{\left(\vx_i, y_i\right)}$ by considering an ambiguity set of the form $\mathcal{Q} = \left\{Q \ : \ d\left(\hat{P},Q\right) \leq \epsilon\right\}$ where $d$ is some divergence, usually an $f$-divergence \citep{Rnyi1961OnMO} or Wasserstein distance \citep{Villani2009}, and the optimization problem considers the worst-case perspective with respect to $\mathcal{Q}$ as $h^{*} \in \arg\min_{h \in \mathcal{H}} \sup_{Q \in \mathcal{Q}}\mathbb{E}_{Q}\left[\ell\left(\ry, h\left(\rvx\right)\right)\right]$. The aim is to get the robust predictor by considering all the distributions that lie in the vicinity of the empirical training distribution $\hat{P}_{D}$, i.e. consider a range a distributions that might have resulted in the dataset $D$ instead of the single base measure $P$ as in the traditional setup. 


\paragraph{Fairness / Min-Max fairness} To prevent the systemic disadvantage of the machine learning systems against some protected sub-groups \citep{buolamwini2018gender}, 
a min-max notion of fairness is proposed \citep{rothblum2021multi, pmlr-v119-martinez20a, mohri2019agnostic} to control the worst-error faced by any (sub)population. Given $\mathcal{Q}=\{Q_1, Q_2, \ldots, Q_{k}\}$ as a set of distinct populations, the goal is design a predictor $h^{*} \in \arg\min_{h \in \mathcal{H}}\max_{Q \in \mathcal{Q}}\mathbb{E}_{Q}\left[\ell\left(h_{\theta}\left(\rvx\right), \ry\right)\right]$. The rationale for min-max fairness is to directly control the worst affected population, thereby to control the maximum algorithmic disparity. Furthermore, it also guarantees that error on other sub-populations can be bounded from above.

\paragraph{Decision-Making and Risk Measures} 
A decision-maker could aim for a different decision-making perspective instead of the risk-neutral disposition of the expectations.  Formally, given $\rvz$: a random variable for some loss with some base distribution $P$ over it, the goal is to define a aggregation function $\mathbb{R}\left(\rvz\right)$ to quantify the inherent risk $\rvz$. Contrary to the standard expectations, there is an emerging trend of incorporating alternate risk measures instead of the expectation in the traditional machine learning setup \citep{pmlr-v206-mehta23b, modelbasedCVaRandbeyond}, in particular coherent risk measures \citep{coherentriskmeasures} forms an axiomatic approach to risk aggregation widely studied in finance.  It can be shown that every coherent risk measure has a unique envelope representation in the form of a closed and convex set $\mathcal{Q}$ \citep{Rockafellar2007CoherentAT}. Intuitively, given $P$, one can consider a range of density functions $\mathcal{Q}$ (or equivalently probability measures absolutely continuous   with respect to $P$), and construct $\mathbb{R}\left[\rvz\right] = \sup_{Q \in \mathcal{Q}}\mathbb{E}_{Q}\left[\rvz\right]$. $\text{CVaR}_{\alpha}$ \citep{Serraino2013, acerbi2002coherence} is a special risk measure as any other coherent risk measure can be constructed from it \citep{Kusuoka2001}, and it has gained particular attention in machine learning. An envelope (in terms of densities) for $\text{CVaR}_{\alpha}$ is given as $\mathcal{Q} = \{Q \ : \ 0 \leq Q \leq \frac{1}{1-\alpha}, \ \mathbb{E}_{P}\left[Q\right] = 1\}$. In contrast to the multiple distributions over $\mathcal{X} \times \mathcal{Y}$, $\text{CVaR}_{\alpha}$ for $\alpha \in [0,1)$, considers re-weightings upto $\frac{1}{1-\alpha}$ of the same base distribution. Intuitively, a decision-maker considers a range of weighing functions over the loss' distribution function as per their decision-making disposition. A risk neutral agent considers the whole distribution function uniformly, however a risk-averse agent might focus more on the extreme events.

\looseness=-1Thus, except for the case of min-max fairness, the stated lower bound can be attained, and those problems have a tractable optimal solution as $h^{*}\left(\vx\right) = Q^{*}\left(\ry \vert \ \rvx=\vx\right)$ where $Q^{*}$ is the one that has the maximum generalized entropy for the associated loss function $\ell$ in the set $\mathcal{Q}$. However for fairness, it is possible to consider the convex hull of the set $\mathcal{Q}$, denoted as $\text{conv}\left(\mathcal{Q}\right)$ to apply the result. However, the optimal distribution $Q^{*}$ may not be coincide with any $Q \in \mathcal{Q}$, and can lie inside the interior of the convex hull $\text{conv}\left(\mathcal{Q}\right)$. 

\fbox{\parbox{16.28cm}{\textbf{Takeaway 1:} In MDL, learning with respect to the distribution with the maximum (generalized) entropy associated with the considered loss function gives the attainable lower bound for MDL.  
}}

\section{Calibration in MDL}

As noted before, MDL framework becomes popular due to the increasing demands for robustness, fairness, and decision-making of the predictors in machine learning. However, one crucial assumption is still unaddressed. The assumption is that of the loss function $\ell$: machine learning practitioners are required to specify a loss function beforehand, and the resulting systems provide the low-risk guarantee for this specific loss function. The proposed framework of MDL falls short of translating the promise of robustness, fairness, and decision-making to arbitrary cost / utility functions. Calibration, as we argued in Section \ref{subsec:calibration-and-refinement}, is one missing piece that can help fulfill that promise. In this section, we study the notion of canonical calibration (Definition \ref{def:canonical-calibration}) in the case of MDL. In particular, we are interested in the native calibration properties as emerged out of the MDL framework, with special focus on their decision-theoretic aspects. This is generally tricky as the underlying predictor in MDL can depend on the notion of universality employed, however our analysis in the previous section (Section \ref{sec:MDL}) allow us to write the optimal predictor for MDL tractably. Hence, we take a general approach and study the calibration behavior for MDL at its attainable lower bound. We borrow heavily from Section \ref{subsec:calibration-and-refinement}, in particular the geometry of risk minimization with proper scoring losses.

Given an MDL problem for the set of distributions $\mathcal{Q}$ and a (proper) loss $\ell$, the optimal solution is given by a saddle-point $\left(Q^{*}, h^{*}\right)$ where $Q^{*}=\arg\max_{Q \in \mathcal{Q}}H_{\ell}\left(Q\left(\ry \ \vert \ \rvx\right)\right)$ and $h^{*}\left(\vx\right) = Q^{*}\left(\ry \ \vert \ \rvx=\vx\right)$, and from the saddle-point definition, it follows that $\mathbb{E}_{Q}\left[\ell\left(\ry, h^{*}\left(\rvx\right)\right)\right] \leq \mathbb{E}_{Q^{*}}\left[\ell\left(\ry, h^{*}\left(\rvx\right)\right)\right] \leq \mathbb{E}_{Q^{*}}\left[\ell\left(\ry, h\left(\rvx\right)\right)\right]$ for any $Q \in \mathcal{Q}, h \in \mathcal{H}$. For calibration for a distribution $Q \in \mathcal{Q}$, we have to analyze $Q\left(\ry \ \vert \ h^{*}\left(\rvx\right) = \nu\right)$, where $\nu \in \Delta^{|\mathcal{Y}|}$ is some prediction. It is easy to check that the canonical calibration property is satisfied for the distribution $Q^{*}$, i.e. $Q^{*}\left(\ry \ \vert \ h^{*}\left(\rvx\right) = \nu\right) = \nu$. However, our goal is to study the calibration behavior across the set of distributions $\mathcal{Q}$. We note that while choosing the Bayes optimal $Q^{*}\left(\ry \ \vert \ \rvx\right)$ certainly is not optimal for any $Q \neq Q^{*}$ and one incurs non-trivial error on $Q$ for doing so, the argument about calibration is more nuanced as calibration asks for alignment between the forecast $\nu$ and $Q\left(\ry \ \vert \ \nu\right)$, and the calibration behavior for $Q \neq Q^{*}$ does not follow from the error argument. For example, a vacuous average predictor is calibrated but is not Bayes optimal. 

Following from the calibration-refinement decomposition in Section \ref{subsec:calibration-and-refinement} and the saddle-point property, we state two \textit{easy-to-verify} results for calibration in MDL:

\begin{proposition}\label{prop:calibration-error-bound-in-terms-of-entropy-difference}
    \textit{(Calibration error bound)}. For MDL over $\mathcal{Q}$ and the loss function $\ell$, and the optimal predictor $h^{*}:=Q^{*}\left(\ry \ \vert \ \rvx\right)$ with the maximum generalized entropy, the calibration error for any distribution $Q \in \mathcal{Q}$ is bounded as below:
    \begin{align*}
    \begin{split}
    \mathbb{E}_{Q}\left[d_{\ell}\left(Q\left(\ry \ \vert \ h^{*}\left(\rvx\right)\right), \ h^{*}\left(\rvx\right)\right)\right]  \leq \mathbb{E}_{Q^{*}}\left[H_{\ell}\left(Q^{*}\left(\ry \ \vert \ \rvx\right)\right)\right] - \mathbb{E}_{Q}\left[H_{\ell}\left(Q\left(\ry \ \vert \ \rvx\right)\right)\right].
    \end{split}
    \end{align*}
    Furthermore, barring any distributional assumptions between $Q$ and $Q^{*}$, the predictor $h^{*}\left(\vx\right)=Q^{*}\left(\ry \ \vert  \ \rvx = \vx\right)$ cannot be perfectly canonically calibrated for $Q$. 
\end{proposition}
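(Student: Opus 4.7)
The plan is to obtain both parts from the calibration-refinement decomposition together with the saddle-point characterization of $(Q^{*}, h^{*})$ already established in Section \ref{sec:MDL}. First I would apply the decomposition to the pair $(Q, h^{*})$ to get
\begin{equation*}
\mathbb{E}_{Q}\left[\ell(\ry, h^{*}(\rvx))\right] = \mathbb{E}_{Q}\left[d_{\ell}(Q(\ry \mid h^{*}(\rvx)), h^{*}(\rvx))\right] + \mathbb{E}_{Q}\left[H_{\ell}(Q(\ry \mid h^{*}(\rvx)))\right],
\end{equation*}
solve for the calibration term, and then substitute an upper bound on the risk and a lower bound on the refinement.

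For the risk, the saddle-point inequality $K(Q, h^{*}) \leq K(Q^{*}, h^{*})$ gives $\mathbb{E}_{Q}[\ell(\ry, h^{*}(\rvx))] \leq \mathbb{E}_{Q^{*}}[\ell(\ry, h^{*}(\rvx))]$. Because $h^{*} = Q^{*}(\ry \mid \rvx)$ is perfectly calibrated under its own generating distribution $Q^{*}$, the Bregman (calibration) term vanishes and the right-hand side collapses to the generalized entropy $\mathbb{E}_{Q^{*}}[H_{\ell}(Q^{*}(\ry \mid \rvx))]$. For the refinement, since $h^{*}$ is a deterministic function of $\rvx$, the tower property gives $Q(\ry \mid h^{*}(\rvx)) = \mathbb{E}_{Q}[Q(\ry \mid \rvx) \mid h^{*}(\rvx)]$, and concavity of $H_{\ell}$ (a point-wise infimum of linear functionals, as already noted in the excerpt) combined with Jensen's inequality yields $\mathbb{E}_{Q}[H_{\ell}(Q(\ry \mid h^{*}(\rvx)))] \geq \mathbb{E}_{Q}[H_{\ell}(Q(\ry \mid \rvx))]$. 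Plugging both estimates into the rearranged decomposition produces the stated bound.

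For the impossibility claim, I would derive a necessary condition from hypothetical perfect calibration and argue that it fails without structural assumptions. If $h^{*}$ were perfectly canonically calibrated for $Q$, then $Q(\ry \mid h^{*}(\rvx)) = h^{*}(\rvx) = Q^{*}(\ry \mid \rvx)$ almost surely under $Q$, so the calibration error vanishes and risk equals refinement. Expanding the risk via the Bregman representation of the proper loss against the true target $Q(\ry \mid \rvx)$ forces the identity
\begin{equation*}
\mathbb{E}_{Q}\left[H_{\ell}(Q^{*}(\ry \mid \rvx))\right] - \mathbb{E}_{Q}\left[H_{\ell}(Q(\ry \mid \rvx))\right] = \mathbb{E}_{Q}\left[d_{\ell}(Q(\ry \mid \rvx), Q^{*}(\ry \mid \rvx))\right],
\end{equation*}
a rigid structural link between the two conditionals that cannot be expected in general. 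The main obstacle here, I expect, is making this ``generically fails'' conclusion precise rather than merely plausible; a clean way is to exhibit a concrete counterexample (for instance a binary-label, two-point $\mathcal{X}$ setting in which $Q^{*}(\ry \mid \rvx)$ is constant in $\rvx$, so that $h^{*}$ is a constant function and $Q(\ry \mid h^{*}(\rvx))$ collapses to the $Q$-marginal of $\ry$, which will generically differ from the common value of $h^{*}(\rvx)$).
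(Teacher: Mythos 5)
Your proposal is correct and follows essentially the same route as the paper: the calibration--refinement decomposition applied to $(Q,h^{*})$, the saddle-point inequality to bound the risk by $\mathbb{E}_{Q^{*}}[H_{\ell}(Q^{*}(\ry\mid\rvx))]$, and the Jensen/data-processing inequality $\mathbb{E}_{Q}[H_{\ell}(Q(\ry\mid h^{*}(\rvx)))]\geq\mathbb{E}_{Q}[H_{\ell}(Q(\ry\mid\rvx))]$, which you in fact justify more explicitly than the paper does. For the impossibility claim the paper extracts the necessary condition $\mathbb{E}_{Q}[H_{\ell}(Q^{*}(\ry\mid\rvx))]\leq\mathbb{E}_{Q^{*}}[H_{\ell}(Q^{*}(\ry\mid\rvx))]$ rather than your Bregman identity, but both are the same kind of ``rigid distributional link'' argument, and your proposed constant-predictor counterexample mirrors the paper's own 50--50 population illustration.
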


\begin{proof}
    From the calibration-refinement decomposition and the property of the saddle-point, it follows that 
 \begin{align*}
    \begin{split}
    \mathbb{E}_{Q}\left[d_{\ell}\left(Q\left(\ry \ \vert \ h^{*}\left(\rvx\right)\right), \ h^{*}\left(\rvx\right)\right)\right]  \leq \mathbb{E}_{Q^{*}}\left[H_{\ell}\left(Q^{*}\left(\ry \ \vert \ \rvx\right)\right)\right] - \mathbb{E}_{Q}\left[H_{\ell}\left(Q\left(\ry \ \vert \ h^{*}\left(\rvx\right)\right)\right)\right],
    \end{split}
    \end{align*}
    and since $\mathbb{E}_{Q}\left[H_{\ell}\left(Q\left(\ry \ \vert \ h^{*}\left(\rvx\right)\right)\right)\right] \geq \mathbb{E}_{Q}\left[H_{\ell}\left(Q\left(\ry \ \vert \ \rvx \right)\right)\right]$, the upper bound on the calibration error for $Q$ follows. Furthermore, assume that the calibration error in the above expression is zero. Then, it follows that $\mathbb{E}_{Q}\left[H_{\ell}\left(Q\left(\ry \ \vert \ h^{*}\left(\rvx\right)\right)\right)\right] \leq \mathbb{E}_{Q^{*}}\left[H_{\ell}\left(Q^{*}\left(\ry \ \vert \ \rvx\right)\right)\right]$, or $\mathbb{E}_{Q}\left[H_{\ell}\left(Q^{*}\left(\ry \ \vert \ \rvx\right)\right)\right] \leq \mathbb{E}_{Q^{*}}\left[H_{\ell}\left(Q^{*}\left(\ry \ \vert \ \rvx\right)\right)\right]$, which gives the relation over the distributions $Q$ and $Q^{*}$. Furthermore, in the case when this holds with equality, the predictor is maximally uninformative for the distribution $Q$ as $\mathbb{E}_{Q^{*}}\left[H_{\ell}\left(Q^{*}\left(\ry \ \vert \ \rvx\right)\right)\right]$ has the maximum generalized entropy for the loss function $\ell$ in the envelope $\mathcal{Q}$. 
\end{proof}
To further elaborate, consider the risk prediction scenario over the hypothetical population where every member has a $50-50$ chance of having certain risk. Then the optimal predictor in the one which outputs $0.5$ confidence for every member. For this predictor to be perfectly calibrated on some other population, the marginal label distribution for this population also needs to be $50-50$.  

\begin{corollary}\label{coro:non-uniform-disparity}
    \textit{(Non-uniform disparity)}. The calibration errors over the set of distributions $\mathcal{Q}$ form a non-increasing sequence converging to $0$, i.e. 
    $$\lim_{Q \to Q^{*}}\mathbb{E}\left[d_{\ell}\left(Q\left(\ry \ | \ h^{*}\left(\rvx\right)\right), h^{*}\left(\rvx\right)\right)\right] = 0.$$
\end{corollary}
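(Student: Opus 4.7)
The strategy is to use Proposition \ref{prop:calibration-error-bound-in-terms-of-entropy-difference} as a sandwich: the calibration error for any $Q \in \mathcal{Q}$ is non-negative and bounded above by the entropy gap
\begin{equation*}
\Gamma(Q) := \mathbb{E}_{Q^{*}}\bigl[H_{\ell}(Q^{*}(\ry \vert \rvx))\bigr] - \mathbb{E}_{Q}\bigl[H_{\ell}(Q(\ry \vert \rvx))\bigr],
\end{equation*}
which is non-negative by the maximum-entropy property of $Q^{*}$. The ``non-increasing'' part of the claim is then immediate from the monotonicity of this upper bound: along any path in $\mathcal{Q}$ that shrinks $\Gamma(Q)$, the governing calibration bound decreases correspondingly. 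The substantive content is therefore that $\Gamma(Q) \to 0$ as $Q \to Q^{*}$.

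First I would establish continuity of the functional $\Phi : Q \mapsto \mathbb{E}_{Q}[H_{\ell}(Q(\ry \vert \rvx))]$ on $\mathcal{Q}$. Because $\ell$ is continuous in its second argument and takes values in $[0, M]$, the generalized entropy $H_{\ell}(\eta) = \inf_{h \in \Delta^{|\mathcal{Y}|}} \mathbb{E}_{\ry \sim \eta}[\ell(\ry, h)]$ is a pointwise infimum of continuous affine functionals in $\eta$, hence upper semi-continuous, and is bounded by $M$; combined with concavity this yields continuity on all of $\Delta^{|\mathcal{Y}|}$. Given a sequence $\{Q_n\} \subset \mathcal{Q}$ with $Q_n \to Q^{*}$ (in total variation, say, or weakly when combined with the boundedness), the dominated convergence theorem yields $\Phi(Q_n) \to \Phi(Q^{*})$, so $\Gamma(Q_n) \to 0$. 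The squeeze
\begin{equation*}
0 \;\leq\; \mathbb{E}_{Q_n}\bigl[d_{\ell}(Q_n(\ry \vert h^{*}(\rvx)), h^{*}(\rvx))\bigr] \;\leq\; \Gamma(Q_n) \;\to\; 0
\end{equation*}
then closes the argument.

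The main obstacle is the continuity of $H_{\ell}$: for unbounded losses such as the log-loss, $H_{\ell}$ can fail to extend continuously to the vertices of the simplex. However, the standing boundedness $\ell \in [0,M]$ inherited from Proposition \ref{thm:attainable-lower-bound} rules out such pathologies, so the pointwise-infimum-of-affines argument suffices. One subtlety worth flagging: strict monotonicity of the \emph{actual} calibration error along arbitrary sequences approaching $Q^{*}$ need not hold — the corollary is best read as a statement about the controlling upper bound $\Gamma$, which is monotone in a well-defined sense and forces the calibration error to zero by the sandwich above.
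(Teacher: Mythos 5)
Your proof takes the same route the paper intends: the corollary is stated there without proof, as an immediate squeeze between $0$ and the entropy gap of Proposition \ref{prop:calibration-error-bound-in-terms-of-entropy-difference}, and your reading of the ``non-increasing'' clause as a statement about the controlling upper bound $\Gamma$ rather than the calibration error itself is the right one. One step to tighten: dominated convergence does not directly give $\Phi(Q_n)\to\Phi(Q^{*})$, because total-variation convergence of the joints does not imply pointwise convergence of the conditionals $Q_n(\ry \mid \rvx)$ appearing inside $H_{\ell}$. The cleaner argument is to write $\Phi(Q)=\inf_{h}\mathbb{E}_{Q}\left[\ell\left(\ry,h\left(\rvx\right)\right)\right]$ and observe that each functional $Q\mapsto\mathbb{E}_{Q}\left[\ell\left(\ry,h\left(\rvx\right)\right)\right]$ is $M$-Lipschitz in total variation uniformly in $h$ (since $\ell\in[0,M]$), so their infimum $\Phi$ is itself $M$-Lipschitz; this yields $\Gamma(Q_n)\to 0$ without touching the conditionals and closes the sandwich.
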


\begin{wrapfigure}{r}{0.5\linewidth}
    \centering
    \includegraphics[width=0.82\linewidth, height=6.5cm]{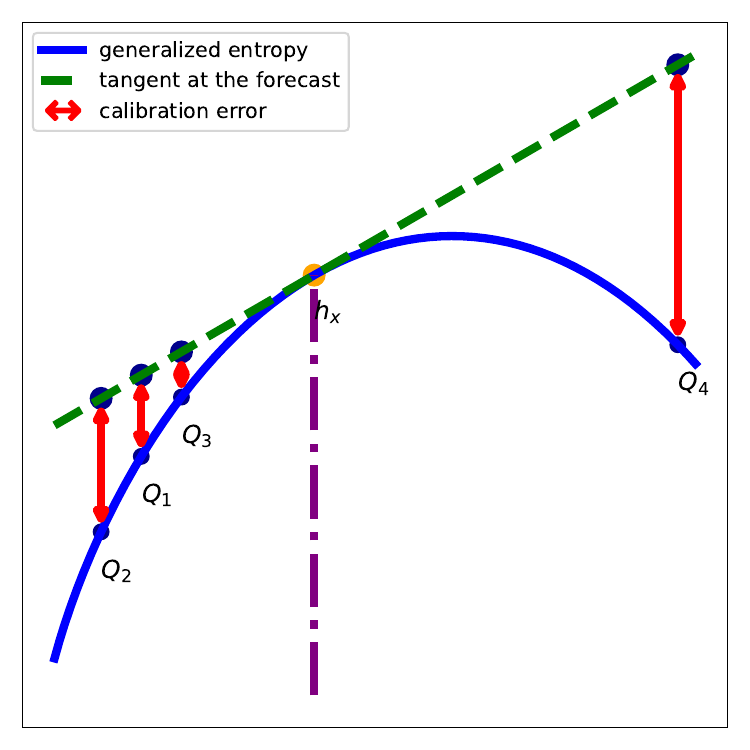}
    \caption{\textit{Calibration disparity intuition in MDL:} For a forecast $h_{\vx}$, the calibration error is defined by the (generalized) entropy function $H_{\ell}$ as a hyperplane at $h_{\vx}$ evaluated at $Q\left(\ry \ \vert \ h_{\vx}\right)$ minus $H_{\ell}\left(Q\left(\ry \ \vert \ h_{\vx}\right)\right)$.}
    \label{fig:mis-calibration-intuition}
\end{wrapfigure}

To summarize, by the definition of $Q^{*}$, the upper-bound on the calibration error is positive, and under only the distributional assumptions the calibration error can be perfectly assumed to be zero. Thus, the generalized entropy difference in $\mathcal{Q}$ controls the calibration error, and the calibration errors are non-uniformly distributed for different distributions in $\mathcal{Q}$. Given that calibration error for any $Q \neq Q^{*}$ can only be assumed to be zero under certain assumptions,
the geometry of proper loss minimization gives additional perspective on the calibration error.  As shown in Figure \ref{fig:mis-calibration-intuition}, for the forecast $\nu$, calibration error for the distribution $Q$ can intuitively be defined as the value of the tangent to the generalized entropy function at $\nu$ evaluated at $Q\left(\ry \ \vert \ \nu\right)$ minus the generalized entropy value of $Q\left(\ry \ \vert \ \nu\right)$,  and it increases as one goes further $\nu$. When the perfect calibration does not hold for some forecast $\nu$, graphically it can be deduced that it leads to the calibration-refinement trade-off for individual predictions, i.e. when the refinement of the individual prediction $\nu$ is low for $Q$ (or the discriminativeness of the prediction is high), it will be compensated by the increased calibration error for this prediction. While this is generally true for any high-confident prediction in a single-distribution case, the Bayes optimal predictor guards one against this drawback as for any $Q$, the Bayes optimal predictor is perfectly calibrated as well as has the lowest refinement possible. However, we assert this is a fundamental artifact of MDL. We state it (in)formally below:

\begin{corollary}
    There is a fundamental calibration-refinement trade-off in MDL, even at optimality.  Furthermore, a prediction has different meaning for different distributions. 
\end{corollary}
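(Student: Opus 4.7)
The plan is to chain the calibration--refinement decomposition (the Lemma in Section~\ref{subsec:calibration-and-refinement} applied to $Q$ with predictor $h^*$) with a second, independent expression for $\mathbb{E}_Q[\ell(\ry, h^*(\rvx))]$ coming from the Bregman-divergence representation of a proper loss. Equating the two forms produces an algebraic identity containing exactly one degree of freedom, and that degree of freedom is what carries the trade-off.

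First I would write $\mathbb{E}_Q[\ell(\ry, h^*(\rvx))] = \mathrm{CE}_Q + \mathrm{Ref}_Q$ directly, with $\mathrm{CE}_Q := \mathbb{E}_Q[d_\ell(Q(\ry\,|\,h^*(\rvx)), h^*(\rvx))]$ and $\mathrm{Ref}_Q := \mathbb{E}_Q[H_\ell(Q(\ry\,|\,h^*(\rvx)))]$. Then I would re-expand the same expectation using the pointwise identity $L(\eta, h) = H_\ell(\eta) + d_\ell(\eta, h)$ at $\eta = Q(\ry\,|\,\rvx)$ and $h = h^*(\vx) = Q^*(\ry\,|\,\rvx = \vx)$, obtaining $\mathbb{E}_Q[\ell(\ry, h^*(\rvx))] = H_\ell(Q) + \Delta_Q$, where $H_\ell(Q) := \mathbb{E}_Q[H_\ell(Q(\ry\,|\,\rvx))]$ and $\Delta_Q := \mathbb{E}_Q[d_\ell(Q(\ry\,|\,\rvx), Q^*(\ry\,|\,\rvx))]$. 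Equating the two forms yields the key identity
\[
\mathrm{CE}_Q + \mathrm{Ref}_Q \;=\; H_\ell(Q) + \Delta_Q.
\]
A conditional Jensen application (concavity of $H_\ell$ together with $\sigma(h^*(\rvx)) \subseteq \sigma(\rvx)$) gives $\mathrm{Ref}_Q \geq H_\ell(Q)$, so any drop of $\mathrm{CE}_Q$ below $\Delta_Q$ is matched one-for-one by a rise of $\mathrm{Ref}_Q$ above its distribution-intrinsic floor $H_\ell(Q)$. Proposition~\ref{prop:calibration-error-bound-in-terms-of-entropy-difference} already furnishes $\Delta_Q > 0$ whenever $Q$ and $Q^*$ disagree on conditionals on a set of positive $Q$-measure, so assuming the single-distribution ideal $\mathrm{CE}_Q = 0$ and $\mathrm{Ref}_Q = H_\ell(Q)$ simultaneously forces $\Delta_Q = 0$, a contradiction; this is the fundamental trade-off.

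For the second sentence, the operational meaning of a forecast value $\nu$ under distribution $Q$ is the level-set conditional $Q(\ry\,|\,h^*(\rvx) = \nu)$, which is exactly the object plugged into the Bayes action in Proposition~\ref{prop:calibration-and-decision-making-regular-calibration}. This object equals $\nu$ when $Q = Q^*$ but by Proposition~\ref{prop:calibration-error-bound-in-terms-of-entropy-difference} cannot in general equal $\nu$ for $Q \neq Q^*$, so the map $Q \mapsto Q(\ry\,|\,h^*(\rvx) = \nu)$ is non-constant and the same forecast value induces different optimal actions across members of $\mathcal{Q}$. The part I expect to be delicate is not the identity itself but upgrading it to the pointwise geometric picture of Figure~\ref{fig:mis-calibration-intuition}: a per-forecast trade-off would demand uniform control of $d_\ell(Q(\ry\,|\,\nu), \nu)$ in $\nu$, which is sensitive to the behaviour of $\nabla H_\ell$ near the boundary of $\Delta^{|\mathcal{Y}|}$ (blowing up for log-loss, bounded for Brier). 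Staying at the expectation level, as above, is the cleanest route to an unconditional statement and avoids these regularity assumptions.
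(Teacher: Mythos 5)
Your argument is essentially the paper's own: both rest on the calibration--refinement decomposition of $\mathbb{E}_Q[\ell(\ry,h^{*}(\rvx))]$ together with the observation that this risk strictly exceeds the Bayes floor $\mathbb{E}_Q[H_{\ell}(Q(\ry\,|\,\rvx))]$, so the calibration error and the refinement cannot simultaneously attain their single-distribution ideals. Your write-up is somewhat sharper than the paper's, in that you identify the excess explicitly as the Bregman gap $\Delta_Q=\mathbb{E}_Q[d_{\ell}(Q(\ry\,|\,\rvx),Q^{*}(\ry\,|\,\rvx))]$ and justify $\mathrm{Ref}_Q\ge \mathbb{E}_Q[H_{\ell}(Q(\ry\,|\,\rvx))]$ by conditional Jensen where the paper simply asserts $a<b$, and your reading of the second sentence through the level-set conditionals $Q(\ry\,|\,h^{*}(\rvx)=\nu)$ matches the paper's appeal to the calibration-disparity corollary.
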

\begin{proof}
    For a distribution $Q$, the minimal error is $a:=\mathbb{E}_{Q}\left[H_{\ell}\left(Q\left(\ry \ \vert \ \rvx\right)\right)\right]$. And the error of the predictor in MDL is $b:=\mathbb{E}_{Q}\left[\ell\left(\ry, h^{*}\left(\rvx\right)\right)\right]$ which decomposes as $\mathbb{E}_{Q}\left[d_{\ell}\left(Q\left(\ry \ \vert \ h^{*}\left(\rvx\right), h^{*}\left(\rvx\right)\right)\right)\right] + \mathbb{E}_{Q}\left[H_{\ell}\left(Q\left(\ry\ \vert \ h^{*}\left(\rvx\right)\right)\right)\right]$. Since $a < b$. If the predictor becomes more informative than the Bayes predictor, it will be compensated by increase in the calibration error to maintain the inequality. The last statement in the above result follows trivially from the calibration-disparity result.
\end{proof}
\looseness=-1 One might wonder what the significance of this calibration-refinement trade-off is, especially since MDL, by design, controls the worst-case scenario. However, an interesting question is whether the worst-case guarantee for the loss function $\ell$ translates to arbitrary cost functions of an arbitrary decision-maker? The next statement says no:  

\begin{proposition}\label{prop:multi-distribution-learning-calibration-and-decision-theoretic-property}
    \textit{(Calibration and decision-making)}. Given $\mathcal{Q}$ and a predictor $h$ calibrated with respect to $Q^{*} \in \mathcal{Q}$ with the maximum generalized entropy for a loss function $\ell$, a decision rule $\delta : h\left(\rvx\right) \mapsto \arg\min_{a \in \mathcal{A}}\mathbb{E}_{\ry \sim h\left(\rvx\right)}\left[c\left(a, \ry\right)\right]$ with the action space $\mathcal{A}$ and a cost function $c: \mathcal{A} \times \mathcal{Y} \rightarrow \mathbb{R}_{+}$ is optimal in the worst case sense over $\mathcal{Q}$ as long as the cost function $c$ is consistent with the loss function $\ell$.
\end{proposition}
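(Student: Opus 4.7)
The plan is to reduce the worst-case statement to a single-distribution statement at $Q^{*}$, and then use consistency to transport the saddle-point structure of the $\ell$-game to the analogous zero-sum game with cost $c$. Since $h$ being calibrated with respect to $Q^{*}$ means $h(\vx) = Q^{*}(\ry \mid \rvx = \vx)$, the decision $\delta(h(\vx))$ is pointwise the Bayes-optimal action for $Q^{*}$ under cost $c$. Proposition \ref{prop:calibration-and-decision-making-regular-calibration} applied with $P = Q^{*}$ then yields
$$\mathbb{E}_{Q^{*}}[c(\delta(h(\rvx)), \ry)] = \min_{\delta'} \mathbb{E}_{Q^{*}}[c(\delta'(\rvx), \ry)] = \mathbb{E}_{Q^{*}}[H_{c}(Q^{*}(\ry \mid \rvx))],$$
where I define $H_{c}(\eta) := \min_{a \in \mathcal{A}} \mathbb{E}_{\ry \sim \eta}[c(a, \ry)]$, the generalized entropy induced by $c$, which is concave in $\eta$ as a pointwise infimum of affine functionals in the same sense exploited throughout Section \ref{sec:MDL} for $\ell$.

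I would then formalize ``consistency of $c$ with $\ell$'' as the assertion that any maximizer of the $\ell$-induced expected generalized entropy over $\mathcal{Q}$ also maximizes the $c$-induced one, i.e., $Q^{*} \in \arg\max_{Q \in \mathcal{Q}} \mathbb{E}_{Q}[H_{c}(Q(\ry \mid \rvx))]$. Intuitively this says the hardest distribution for the surrogate loss $\ell$ remains the hardest for the downstream cost. Under the same compactness and convexity assumptions invoked for the $\ell$-game in Section \ref{sec:MDL}, the minimax theorem applies to the zero-sum game with cost $c$ and gives
$$\min_{\delta'} \sup_{Q \in \mathcal{Q}} \mathbb{E}_{Q}[c(\delta'(\rvx), \ry)] = \sup_{Q \in \mathcal{Q}} \mathbb{E}_{Q}[H_{c}(Q(\ry \mid \rvx))] = \mathbb{E}_{Q^{*}}[H_{c}(Q^{*}(\ry \mid \rvx))].$$
Chaining this with the previous display, $\delta$ attains the minimax value for $c$ over $\mathcal{Q}$, which is the worst-case optimality claimed.

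The main obstacle is pinning down what ``consistency'' should mean; without some form of alignment between $H_{\ell}$ and $H_{c}$ on $\mathcal{Q}$, the saddle-point for $c$ can sit at a distribution different from $Q^{*}$, and then basing $\delta$ on $h$ (which encodes only $Q^{*}$) need not control $\sup_{Q \in \mathcal{Q}} \mathbb{E}_{Q}[c]$. To situate the result I would add a short remark observing that if $c$ is a positive affine rescaling of $\ell$---or more generally if $H_{c}$ is a monotone transformation of $H_{\ell}$ along $\mathcal{Q}$---the condition holds automatically; this clarifies why MDL's worst-case guarantee for $\ell$ only \emph{conditionally} transfers to arbitrary downstream decision costs, reinforcing the calibration-refinement tension made precise by Corollary \ref{coro:non-uniform-disparity}.
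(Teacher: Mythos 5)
Your proof follows essentially the same route as the paper's: apply Proposition \ref{prop:calibration-and-decision-making-regular-calibration} at $Q^{*}$ to obtain average-optimality of $\delta$ under $Q^{*}$, read ``consistency'' as the assumption that $Q^{*}$ also maximizes the $c$-induced generalized entropy over $\mathcal{Q}$, and conclude worst-case optimality --- you are in fact slightly more careful than the paper in making the minimax duality for the $c$-game explicit via $H_{c}$. One small caution: calibration of $h$ with respect to $Q^{*}$ does not by itself imply $h\left(\vx\right) = Q^{*}\left(\ry \mid \rvx=\vx\right)$ (the paper's own marginal-predictor example shows this), so your identity $\min_{\delta'}\mathbb{E}_{Q^{*}}\left[c\left(\delta'\left(\rvx\right),\ry\right)\right] = \mathbb{E}_{Q^{*}}\left[H_{c}\left(Q^{*}\left(\ry\mid\rvx\right)\right)\right]$ needs $h$ to be the MDL-optimal (Bayes) predictor for $Q^{*}$, which is the intended setting here and is implicitly assumed in the paper's proof as well.
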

The proof is in Section \ref{sec:decision-theoretic-calibration-proofs-section} together with the proof of Proposition \ref{prop:calibration-and-decision-making-regular-calibration}. In contrast to the regular calibration (Proposition \ref{prop:calibration-and-decision-making-regular-calibration}), a decision-maker in MDL is limited by the kind of cost functions they can consider to successfully exploit the worst-case guarantee provided by the framework. Although these are simple results, we argue that they might have profound implications, in particular for decision-making. In fairness, for example, the lack of uniform calibration in $\mathcal{Q}$ necessitates further post-processing of the prediction for different populations depending on the miscalibration. This might require identifiability of the distribution, auditing for calibration errors, and correcting it.

\paragraph{Guide to Practitioners} We next discuss the relevance of our results for practice. We agree that calibration might not be required in every applications. For example, in a fault tolerance system, if the institution only cares about a certain fault cost function, then the MDL framework would guarantee the institution against the worst-case scenario. However, in the case of general application cases like healthcare where a medical professional has to reason about arbitrary cost / utility functions, calibration of the predictor becomes an underlying requirement to bridge the institutional separation between the training time loss function and the decision-time cost / utility function. In particular, consider the motivated use-case of MDL where several different healthcare facilities jointly learn a single predictor using the MDL framework to individually allocate the decisions. In this scenario, our results indicate a critical limitation. For one, calibration for each healthcare facility is not guaranteed, and secondly, the miscalibration errors can be non-uniform leading to a different interpretation of the same prediction for each facility. Albeit our results do not give an informative lower bound for the calibration error for each distribution, they still inform that care must be taken to equitably use the predictor for arbitrary decisions. Besides post-processing on the decision-makers side, designers can also opt for directly minimizing the upper-bound for the calibration error in Proposition \ref{prop:calibration-error-bound-in-terms-of-entropy-difference} (see section \ref{sec:consequences}: DRO), or certify that the overall error $\mathbb{E}_{Q}\left[\ell\left(\ry, h^{*}\left(\rvx\right)\right)\right]$ is significantly less for each $Q \neq Q^{*}$.

\fbox{\parbox{16.28cm}{\textbf{Takeaway 2:} We have discovered a fundamental calibration-refinement trade-off in the MDL framework. This trade-off is determined by the heterogeneity in terms of the (generalized) entropy in the considered envelope of distributions. Furthermore, learning (optimally) over multiple distributions also does not truly bridge the institutional separation between model designers and decision-makers. A decision-maker is always constrained by the types of cost functions they can consider when exploiting MDL.}}

\section{Related Work}
\paragraph{MDL} \looseness=-1Learning under multiple distributions in machine learning has long been studied independently across different research themes and focus, with representative examples being DRO \citep{DRORahimian_2022} with applications in machine learning for arbitrary distribution shifts \citep{ben2013robust, duchi2021statistics}, domain adaptation \citep{ben2003exploiting, mansour2008domain} collaborative learning to design a common predictive model to achieve lower error for different stakeholders \citep{blum2017collaborative, blum2021communication, nguyen2018improved}, and fairness \citep{rothblum2021multi, pmlr-v119-martinez20a}. Broadly, MDL aims to combat the disparate impact caused by the conventional average guarantees of single distribution learning setup, also referred to as \textit{representation disparity} \citep{hashimoto2018fairness}. A recent like of work \citep{frohlich2024data, frohlich2024risk} argues for MDL as a foundational approach to practical machine learning primarily motivated by the statistical stability problem \citep{Gorban2017}: informally, it is ``naive'' to argue that the statistical regularities in the dataset can be explained by one base distribution.  Hence, the \textit{representation disparity} is a consequence of mis-specified data-driven inference setup.  Independent of different motivations, a common unification under MDL is studied by \citet{haghtalab2022ondemand, zhang2024optimal} with the focus on sample complexity and designing optimal algorithms to learn efficiently. These works extend the agnostic learning setup \citep{valiant1984theory, kearns1992toward} for learning across multiple distributions in the worst-case. Our work can also be seen as a unified study of calibration in MDL frameworks extending the celebrated calibration-refinement results for proper-scoring losses in the single distribution learning setup \citep{Degroot1983TheCA, brocker2009reliability, 10.1007/978-3-319-23528-8_5}. 

In concurrent work, \citet{frohlich2024scoring} study the calibration of imprecise forecasts, and their results bear a strong resemblance with our work. In particular, \citet{frohlich2024scoring} also use the generalized entropy arguments to derive the optimal forecast in the imprecise case as the one with the maximum expected generalized entropy. For DRO, their theoretical results further assert the importance of the training time loss function, and demonstrates how the institutional separation is not truly bridged for arbitrary cost functions at test time. They also formalize the notion of calibration of the imprecise forecast by drawing connections to actuarial fairness, i.e. a forecast is deemed to be calibrated if it enables accurate loss estimation for the decision-maker. For the imprecise forecast, they extend it to accurate estimation of the worst-case loss. In contrast, the setting of MDL we consider involves learning over a set of distributions, with the resulting predictor always considered as precise. Moreover, we concern ourselves with calibration behavior across individual distributions in MDL. 

\paragraph{Relation to Multi-Calibration and Omnipredictors}  Our findings discover a calibration disparity across distributions in the MDL framework. In the single distribution setup, calibration disparity have been widely documented \citep{obermeyer19}, and the literature in multi-calibration \citep{pmlr-v80-hebert-johnson18a} aims to overcome those by post-processing predictors to achieve calibration over (possibly infinite) intersecting subgroups, and the related notion of omniprediction \citep{gopalan2021omnipredictors} provides loss-oblivious learning guarantees that build on top of multi-calibration. However, we assert that the notion of multi-calibration assumes a fixed base distribution $P$ over $\mathcal{X} \times \mathcal{Y}$, and algorithmically multi-calibration is achieved by post-processing the predictor to get it close to the Bayes predictor $P\left(\ry \ \vert \ \rvx\right)$ with respect to $P$. However, the calibration disparity in multi-distribution is more nuanced as the Bayes optimal predictor, itself, is not guaranteed to be perfectly calibrated across all the distributions.

\section{Consequences}\label{sec:consequences}
In this section, we pay special attention to two critical applications where MDL frameworks are prominent, and discuss the consequences of our findings.

\paragraph{Distributional Robust Optimization}
As stated before, the framework of DRO \citep{DRORahimian_2022} discounts one's beliefs in the available base (empirical) distribution $\hat{P}$, and guard against possible mis-specifications, by considering a range of possible scenarios in the form of an ambiguity set. The goal is to achieve robust and generalizable decision-making under uncertainty. \textit{A crucial question then is what kind of ambiguity sets one should consider, and how do these sets influence generalization in decision-making?} For example, a machine learning designer aiming to build a risk-prediction system for some healthcare facility with some available dataset $\hat{P}$ may consider a range of settings $\mathcal{Q}$ to account for possible data imputations, for instance. A medical facility will then use the same system to allocate arbitrary decisions: whether to administer or deny treatment, or to administer treatment $A$ or $B$, e.g. by thresholding the estimated risk. The presence of calibration disparity across $\mathcal{Q}$, however, then puts the medical facility at the risk of \textit{inequitable outcomes} or misinformed decisions \textit{systemically} for different scenarios. How can the medical facility then guarantee robust as well as equitable decisions? 

\begin{corollary}\label{corr:DRO-divergence-corollary}
    For the empirical base distribution $\hat{P}$, an ambiguity set $\mathcal{Q} = \left\{Q \ : \ d\left(\hat{P}, Q\right) \leq \epsilon\right\}$ where the divergence $d$ is $k$-Lipschitz continuous with respect to the generalized entropy function of the considered loss function guarantees robust and equitable decisions.
\end{corollary}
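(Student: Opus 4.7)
The plan is to combine two ingredients already developed in the paper: the worst-case guarantee that is built into the DRO formulation itself, and the calibration-error bound from Proposition~\ref{prop:calibration-error-bound-in-terms-of-entropy-difference}. Robustness is automatic, since $h^{*}$ minimizes $\sup_{Q \in \mathcal{Q}} \mathbb{E}_{Q}[\ell(\ry, h(\rvx))]$ by construction; so the nontrivial content is the equitable-decision claim, which reduces to showing that the calibration error of $h^{*}$ is uniformly small across all $Q \in \mathcal{Q}$. For this I would invoke Proposition~\ref{prop:calibration-error-bound-in-terms-of-entropy-difference} to obtain
\begin{equation*}
\mathbb{E}_{Q}\!\left[d_{\ell}\!\left(Q(\ry \mid h^{*}(\rvx)),\, h^{*}(\rvx)\right)\right] \leq \mathbb{E}_{Q^{*}}\!\left[H_{\ell}(Q^{*}(\ry \mid \rvx))\right] - \mathbb{E}_{Q}\!\left[H_{\ell}(Q(\ry \mid \rvx))\right],
\end{equation*}
for every $Q \in \mathcal{Q}$, so the task reduces to controlling the difference in expected generalized entropies between $Q^{*}$ and $Q$.

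Next, I would unpack the hypothesis that $d$ is $k$-Lipschitz with respect to the generalized-entropy functional, interpreting it as the statement that $\left|\mathbb{E}_{Q_1}[H_{\ell}(Q_1(\ry \mid \rvx))] - \mathbb{E}_{Q_2}[H_{\ell}(Q_2(\ry \mid \rvx))]\right| \leq k \cdot d(Q_1, Q_2)$ for $Q_1, Q_2$ in a neighborhood of $\hat{P}$. Applying this at $(Q^{*}, \hat{P})$ and $(Q, \hat{P})$ and combining with $d(\hat{P}, Q) \leq \epsilon$ and $d(\hat{P}, Q^{*}) \leq \epsilon$ yields a uniform bound on the right-hand side of order $k\epsilon$ (either via a triangle-type inequality when $d$ is a metric, or by inserting $\hat{P}$ as a pivot and summing the two one-sided bounds). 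With the calibration error uniformly $O(k\epsilon)$ across $\mathcal{Q}$, Proposition~\ref{prop:multi-distribution-learning-calibration-and-decision-theoretic-property} then upgrades this to a worst-case, cost-oblivious decision-theoretic guarantee for any cost function consistent with $\ell$, which is the precise sense in which the ambiguity set supports both robust and equitable decisions.

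The main obstacle will be making the Lipschitz hypothesis rigorous for the divergences commonly used in DRO. For $f$-divergences such as KL, the functional is not finite on pairs of mutually singular distributions and the triangle inequality fails, so Lipschitzness of the expected generalized entropy in $d$ must be checked under further regularity, for example by imposing bounded likelihood ratios on $\mathcal{Q}$ or working with a Pinsker-style reduction to total variation. For Wasserstein divergences the situation is cleaner: $W_1$ is a genuine metric and the expected generalized entropy is Lipschitz under mild continuity assumptions on $H_{\ell}$ (which itself follows from boundedness and continuity of $\ell$ on the finite simplex). A careful statement would therefore probably trade the abstract Lipschitz hypothesis for a concrete regularity assumption on $\mathcal{Q}$ and $\ell$, and I would spend the bulk of the proof verifying that the chosen regularity suffices to push the entropy gap through the bound.
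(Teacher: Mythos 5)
Your proposal follows essentially the same route as the paper: the paper's own argument is simply that $k$-Lipschitzness gives $|H_{\ell}(\hat{P}) - H_{\ell}(Q)| \leq \kappa \cdot d(\hat{P},Q) \leq \kappa\epsilon$ throughout the ambiguity set, and this entropy gap controls the calibration disparity via Proposition~\ref{prop:calibration-error-bound-in-terms-of-entropy-difference}. You are in fact somewhat more careful than the paper on two points it glosses over---pivoting through $\hat{P}$ to bound the $Q^{*}$-to-$Q$ entropy gap by $2k\epsilon$ (since Proposition~\ref{prop:calibration-error-bound-in-terms-of-entropy-difference} needs the gap to $Q^{*}$, not to $\hat{P}$), and flagging that the Lipschitz hypothesis must actually be verified for the divergences used in practice---but the underlying idea is identical.
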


The argument follows from the Lipschitz continuity of the divergence with respect to the generalized entropy function. In particular, $|H_{\ell}\left(P\right) - H_{\ell}\left(Q\right)| \leq 
\kappa \cdot d\left(P,Q\right)$. Then if $d\left(P,Q\right) \leq \epsilon$, one can bound the entropy difference in the envelope as $|H_{\ell}\left(P\right) - H_{\ell}\left(Q\right)| \leq \kappa \cdot \epsilon$. Intuitively, a machine learning designer can adopt a suitable divergence metric to control the (generalized) entropy difference within the envelope, thereby controlling the calibration disparity. Choosing a divergence metric is a design-choice, and commonly an $f$-divergence \citep{Rnyi1961OnMO} or Wasserstein distance measures \citep{Villani2009} are employed. While the Wasserstein distance measure is popular due to its tractable dual representation, $f$-divergences provide a flexible alternative, as their sensitivity can be tailored through an appropriately designed convex function to achieve the desired continuity property with respect to the (generalized) entropy function. Moreover, every proper scoring loss function is be associated with a Bregman divergence that satisfies this continuity property. For example, in case of the given risk-prediction task, a machine learning designer can consider a KL divergence, as it can be shown to be Lipschitz continuous with respect to the Shannon entropy function---a (generalized) entropy function associated with the negative log loss.

\paragraph{Fairness / Min-Max Fairness} As stated before, for min-max notion for fairness when there are $k$ distinct distributions, the optimal predictor may lie inside the convex hull of the set of these distributions. This implies that the resulting predictor is not calibrated with respect to any of the $k$ distributions, necessitating corrective measures. Assuming sample access to the distributions, the said predictor can then be corrected by simple post-processing measures like temperature-scaling \citep{guo2017calibration} can be used to fix the calibration errors. However, this introduces extra optimization and sample access complexity. 


\section{Discussion}
\paragraph{Conclusions}We have presented a general theoretical analysis of calibration disparity in the context of multi-distribution learning. By focusing on the relationship between generalized entropy and calibration error bounds, we provide insights into how calibration disparity arises in models optimized over multiple distributions. We highlight the consequences of our findings to decision-making frameworks, such as DRO and fairness, demonstrating the implications for robustness and fairness in real-world applications. Our results provide a foundation for future research on designing more robust and well-calibrated models under distributional uncertainty. 

\paragraph{Limitations and Future Work} Our focus is a theoretical characterization, which sidesteps specific algorithms for MDL and their effects on calibration. Furthermore, we study calibration in the batch setting, and specifically distributional calibration---the most strongest form of calibration \citep{pmlr-v89-vaicenavicius19a}. In the future, our goal is to study the implications of our findings empirically in practical settings to fully characterize the findings. While we highlight the calibration-refinement generally, in DRO a specific choice of divergence metric can reduce that effect. Our work aims to give a constructive guideline on the choice of the divergence metric, however, that choice is also limited by the ease of optimization. Hence, it is interesting to explore alternative notion of divergences that reduce the calibration disparity, while also leading to computationally tractable practical algorithms. Our work highlights potential issues for calibration to realize the full potential of MDL and informs practical design choices to reduce those issues. 

\section{Acknowledgments}
We thank Alexander Timans, Daan Roos, and Patrick Forré for helpful discussions. UvA-Bosch Delta Lab at the University of Amsterdam is funded by the Bosch Center for Artificial Intelligence.

\bibliography{main}

\begin{thebibliography}{61}
\providecommand{\natexlab}[1]{#1}
\providecommand{\url}[1]{\texttt{#1}}
\expandafter\ifx\csname urlstyle\endcsname\relax
  \providecommand{\doi}[1]{doi: #1}\else
  \providecommand{\doi}{doi: \begingroup \urlstyle{rm}\Url}\fi

\bibitem[Acerbi \& Tasche(2002)Acerbi and Tasche]{acerbi2002coherence}
Carlo Acerbi and Dirk Tasche.
\newblock On the coherence of expected shortfall, 2002.

\bibitem[Arjovsky(2020)]{arjovsky2020out}
Martin Arjovsky.
\newblock \emph{Out of distribution generalization in machine learning}.
\newblock PhD thesis, New York University, 2020.

\bibitem[Artzner et~al.(1999)Artzner, Delbaen, Eber, and Heath]{coherentriskmeasures}
Philippe Artzner, Freddy Delbaen, Jean-Marc Eber, and David Heath.
\newblock Coherent measures of risk.
\newblock \emph{Mathematical Finance}, 1999.

\bibitem[Ben-David \& Schuller(2003)Ben-David and Schuller]{ben2003exploiting}
Shai Ben-David and Reba Schuller.
\newblock Exploiting task relatedness for multiple task learning.
\newblock In \emph{Learning Theory and Kernel Machines: Conference on Learning Theory and Kernel Workshop, COLT/Kernel}, 2003.

\bibitem[Ben-Tal et~al.(2009)Ben-Tal, Ghaoui, and Nemirovski]{Ben-Tal-RobustOptimization}
A.~Ben-Tal, L.~El Ghaoui, and A.~Nemirovski.
\newblock \emph{Robust Optimization}.
\newblock Princeton University Press, 2009.

\bibitem[Ben-Tal et~al.(2013)Ben-Tal, Den~Hertog, De~Waegenaere, Melenberg, and Rennen]{ben2013robust}
Aharon Ben-Tal, Dick Den~Hertog, Anja De~Waegenaere, Bertrand Melenberg, and Gijs Rennen.
\newblock Robust solutions of optimization problems affected by uncertain probabilities.
\newblock \emph{Management Science}, 2013.

\bibitem[Blanchard et~al.(2011)Blanchard, Lee, and Scott]{blanchard2011generalizing}
Gilles Blanchard, Gyemin Lee, and Clayton Scott.
\newblock Generalizing from several related classification tasks to a new unlabeled sample.
\newblock \emph{Advances in Neural Information Processing Systems}, 2011.

\bibitem[Blanchard et~al.(2021)Blanchard, Deshmukh, Dogan, Lee, and Scott]{blanchard2021domain}
Gilles Blanchard, Aniket~Anand Deshmukh, Urun Dogan, Gyemin Lee, and Clayton Scott.
\newblock Domain generalization by marginal transfer learning.
\newblock \emph{Journal of Machine Learning Research}, 2021.

\bibitem[Blum et~al.(2017)Blum, Haghtalab, Procaccia, and Qiao]{blum2017collaborative}
Avrim Blum, Nika Haghtalab, Ariel~D Procaccia, and Mingda Qiao.
\newblock Collaborative pac learning.
\newblock \emph{Advances in Neural Information Processing Systems}, 2017.

\bibitem[Blum et~al.(2021)Blum, Heinecke, and Reyzin]{blum2021communication}
Avrim Blum, Shelby Heinecke, and Lev Reyzin.
\newblock Communication-aware collaborative learning.
\newblock In \emph{AAAI Conference on Artificial Intelligence}, 2021.

\bibitem[Bregman(1967)]{bregman1967relaxation}
Lev~M Bregman.
\newblock The relaxation method of finding the common point of convex sets and its application to the solution of problems in convex programming.
\newblock \emph{USSR computational mathematics and mathematical physics}, 1967.

\bibitem[Br{\"o}cker(2009)]{brocker2009reliability}
Jochen Br{\"o}cker.
\newblock Reliability, sufficiency, and the decomposition of proper scores.
\newblock \emph{Quarterly Journal of the Royal Meteorological Society: A journal of the atmospheric sciences, applied meteorology and physical oceanography}, 2009.

\bibitem[Buolamwini \& Gebru(2018)Buolamwini and Gebru]{buolamwini2018gender}
Joy Buolamwini and Timnit Gebru.
\newblock Gender shades: Intersectional accuracy disparities in commercial gender classification.
\newblock In \emph{Conference on Fairness, Accountability and Transparency}, 2018.

\bibitem[Dawid(1982)]{Dawid1982TheWB}
A.~Philip Dawid.
\newblock The well-calibrated bayesian.
\newblock \emph{Journal of the American Statistical Association}, 1982.

\bibitem[Degroot \& Fienberg(1983)Degroot and Fienberg]{Degroot1983TheCA}
Morris~H. Degroot and Stephen~E. Fienberg.
\newblock The comparison and evaluation of forecasters.
\newblock \emph{The Statistician}, 1983.

\bibitem[Derr \& Williamson(2024)Derr and Williamson]{derr2024four}
Rabanus Derr and Robert~C Williamson.
\newblock Four facets of forecast felicity: Calibration, predictiveness, randomness and regret.
\newblock \emph{arXiv preprint arXiv:2401.14483}, 2024.

\bibitem[Duchi et~al.(2021)Duchi, Glynn, and Namkoong]{duchi2021statistics}
John~C Duchi, Peter~W Glynn, and Hongseok Namkoong.
\newblock Statistics of robust optimization: A generalized empirical likelihood approach.
\newblock \emph{Mathematics of Operations Research}, 2021.

\bibitem[Dwork et~al.(2021)Dwork, Kim, Reingold, Rothblum, and Yona]{dwork2021outcome}
Cynthia Dwork, Michael~P Kim, Omer Reingold, Guy~N Rothblum, and Gal Yona.
\newblock Outcome indistinguishability.
\newblock In \emph{ACM SIGACT Symposium on Theory of Computing}, 2021.

\bibitem[Eastwood et~al.(2022)Eastwood, Robey, Singh, Von~K{\"u}gelgen, Hassani, Pappas, and Sch{\"o}lkopf]{eastwood2022probable}
Cian Eastwood, Alexander Robey, Shashank Singh, Julius Von~K{\"u}gelgen, Hamed Hassani, George~J Pappas, and Bernhard Sch{\"o}lkopf.
\newblock Probable domain generalization via quantile risk minimization.
\newblock \emph{Advances in Neural Information Processing Systems}, 2022.

\bibitem[Elkan(2001)]{10.5555/1642194.1642224}
Charles Elkan.
\newblock The foundations of cost-sensitive learning.
\newblock In \emph{International Joint Conference on Artificial Intelligence}, 2001.

\bibitem[Fr{\"o}hlich \& Williamson(2024{\natexlab{a}})Fr{\"o}hlich and Williamson]{frohlich2024data}
Christian Fr{\"o}hlich and Robert~C Williamson.
\newblock Data models with two manifestations of imprecision.
\newblock \emph{arXiv preprint arXiv:2404.09741}, 2024{\natexlab{a}}.

\bibitem[Fr{\"o}hlich \& Williamson(2024{\natexlab{b}})Fr{\"o}hlich and Williamson]{frohlich2024risk}
Christian Fr{\"o}hlich and Robert~C Williamson.
\newblock Risk measures and upper probabilities: Coherence and stratification.
\newblock \emph{Journal of Machine Learning Research}, 2024{\natexlab{b}}.

\bibitem[Fr{\"o}hlich \& Williamson(2024{\natexlab{c}})Fr{\"o}hlich and Williamson]{frohlich2024scoring}
Christian Fr{\"o}hlich and Robert~C Williamson.
\newblock Scoring rules and calibration for imprecise probabilities.
\newblock \emph{arXiv preprint arXiv:2410.23001}, 2024{\natexlab{c}}.

\bibitem[Fr{{\"o}}hlich \& Williamson(2024)Fr{{\"o}}hlich and Williamson]{riskmeasuresandupperprobabilities}
Christian Fr{{\"o}}hlich and Robert~C. Williamson.
\newblock Risk measures and upper probabilities: Coherence and stratification.
\newblock \emph{Journal of Machine Learning Research}, 2024.

\bibitem[Gneiting \& Raftery(2007)Gneiting and Raftery]{properscoringrulesGneiting}
Tilmann Gneiting and Adrian~E Raftery.
\newblock Strictly proper scoring rules, prediction, and estimation.
\newblock \emph{Journal of the American Statistical Association}, 2007.

\bibitem[Gopalan et~al.(2021)Gopalan, Kalai, Reingold, Sharan, and Wieder]{gopalan2021omnipredictors}
Parikshit Gopalan, Adam~Tauman Kalai, Omer Reingold, Vatsal Sharan, and Udi Wieder.
\newblock Omnipredictors.
\newblock \emph{arXiv preprint arXiv:2109.05389}, 2021.

\bibitem[Gorban(2017)]{Gorban2017}
Igor~I. Gorban.
\newblock \emph{The Statistical Stability Phenomenon}.
\newblock Springer Cham, 2017.

\bibitem[Gr{\"u}nwald(2018)]{grunwald2018safe}
Peter Gr{\"u}nwald.
\newblock Safe probability.
\newblock \emph{Journal of Statistical Planning and Inference}, 2018.

\bibitem[Gr{\"u}nwald(2007)]{grunwald2007minimum}
Peter~D Gr{\"u}nwald.
\newblock \emph{The minimum description length principle}.
\newblock 2007.

\bibitem[Gr{\"u}nwald \& Dawid(2004)Gr{\"u}nwald and Dawid]{grunwald2004game}
Peter~D Gr{\"u}nwald and A~Philip Dawid.
\newblock Game theory, maximum entropy, minimum discrepancy and robust bayesian decision theory.
\newblock 2004.

\bibitem[Guo et~al.(2017)Guo, Pleiss, Sun, and Weinberger]{guo2017calibration}
Chuan Guo, Geoff Pleiss, Yu~Sun, and Kilian~Q Weinberger.
\newblock On calibration of modern neural networks.
\newblock In \emph{International Conference on Machine Learning}, 2017.

\bibitem[Haghtalab et~al.(2022)Haghtalab, Jordan, and Zhao]{haghtalab2022ondemand}
Nika Haghtalab, Michael Jordan, and Eric Zhao.
\newblock On-demand sampling: Learning optimally from multiple distributions.
\newblock \emph{Advances in Neural Information Processing Systems}, 2022.

\bibitem[H\'{a}jek(2007)]{Hajek2007-HJETRC}
Alan H\'{a}jek.
\newblock The reference class problem is your problem too.
\newblock \emph{Synthese}, 2007.

\bibitem[Hashimoto et~al.(2018)Hashimoto, Srivastava, Namkoong, and Liang]{hashimoto2018fairness}
Tatsunori Hashimoto, Megha Srivastava, Hongseok Namkoong, and Percy Liang.
\newblock Fairness without demographics in repeated loss minimization.
\newblock In \emph{International Conference on Machine Learning}, 2018.

\bibitem[Hebert-Johnson et~al.(2018)Hebert-Johnson, Kim, Reingold, and Rothblum]{pmlr-v80-hebert-johnson18a}
Ursula Hebert-Johnson, Michael Kim, Omer Reingold, and Guy Rothblum.
\newblock Multicalibration: Calibration for the ({C}omputationally-identifiable) masses.
\newblock In \emph{International Conference on Machine Learning}, 2018.

\bibitem[H{\"o}ltgen \& Williamson(2023)H{\"o}ltgen and Williamson]{holtgen2023richness}
Benedikt H{\"o}ltgen and Robert~C Williamson.
\newblock On the richness of calibration.
\newblock In \emph{ACM Conference on Fairness, Accountability, and Transparency}, 2023.

\bibitem[Kearns et~al.(1992)Kearns, Schapire, and Sellie]{kearns1992toward}
Michael~J Kearns, Robert~E Schapire, and Linda~M Sellie.
\newblock Toward efficient agnostic learning.
\newblock In \emph{Workshop on Computational Learning Theory}, 1992.

\bibitem[Kleinberg et~al.(2023)Kleinberg, Leme, Schneider, and Teng]{kleinberg2023u}
Bobby Kleinberg, Renato~Paes Leme, Jon Schneider, and Yifeng Teng.
\newblock U-calibration: Forecasting for an unknown agent.
\newblock In \emph{Conference on Learning Theory}, 2023.

\bibitem[Kull \& Flach(2015)Kull and Flach]{10.1007/978-3-319-23528-8_5}
Meelis Kull and Peter Flach.
\newblock Novel decompositions of proper scoring rules for classification: score adjustment as precursor to calibration.
\newblock 2015.

\bibitem[Kusuoka(2001)]{Kusuoka2001}
Shigeo Kusuoka.
\newblock \emph{On law invariant coherent risk measures}.
\newblock Springer Japan, 2001.

\bibitem[Mansour et~al.(2008)Mansour, Mohri, and Rostamizadeh]{mansour2008domain}
Yishay Mansour, Mehryar Mohri, and Afshin Rostamizadeh.
\newblock Domain adaptation with multiple sources.
\newblock In \emph{Advances in Neural Information Processing Systems}, 2008.

\bibitem[Martinez et~al.(2020)Martinez, Bertran, and Sapiro]{pmlr-v119-martinez20a}
Natalia Martinez, Martin Bertran, and Guillermo Sapiro.
\newblock Minimax pareto fairness: A multi objective perspective.
\newblock In \emph{International Conference on Machine Learning}, 2020.

\bibitem[Mehta et~al.(2023)Mehta, Roulet, Pillutla, Liu, and Harchaoui]{pmlr-v206-mehta23b}
Ronak Mehta, Vincent Roulet, Krishna Pillutla, Lang Liu, and Zaid Harchaoui.
\newblock Stochastic optimization for spectral risk measures.
\newblock In \emph{International Conference on Artificial Intelligence and Statistics}, 2023.

\bibitem[Meng \& Gower(2023)Meng and Gower]{modelbasedCVaRandbeyond}
Si~Yi Meng and Robert~M. Gower.
\newblock A model-based method for minimizing cvar and beyond.
\newblock In \emph{International Conference on Machine Learning}, 2023.

\bibitem[Mohri et~al.(2019)Mohri, Sivek, and Suresh]{mohri2019agnostic}
Mehryar Mohri, Gary Sivek, and Ananda~Theertha Suresh.
\newblock Agnostic federated learning.
\newblock In \emph{International Conference on Machine Learning}, 2019.

\bibitem[Nguyen \& Zakynthinou(2018)Nguyen and Zakynthinou]{nguyen2018improved}
Huy Nguyen and Lydia Zakynthinou.
\newblock Improved algorithms for collaborative pac learning.
\newblock In \emph{Advances in Neural Information Processing Systems}, 2018.

\bibitem[Noarov \& Roth(2024)Noarov and Roth]{let4allcalibrationblog}
Georgy Noarov and Aaron Roth.
\newblock Calibration for decision making: A principled approach to trustworthy ml, 2024.
\newblock Blog post.

\bibitem[Obermeyer et~al.(2019)Obermeyer, Powers, Vogeli, and Mullainathan]{obermeyer19}
Ziad Obermeyer, Brian Powers, Christine Vogeli, and Sendhil Mullainathan.
\newblock Dissecting racial bias in an algorithm used to manage the health of populations.
\newblock \emph{Science}, 2019.

\bibitem[Ovcharov(2018)]{ovcharov2018proper}
Evgeni~Y Ovcharov.
\newblock Proper scoring rules and bregman divergence.
\newblock 2018.

\bibitem[Rahimian \& Mehrotra(2022)Rahimian and Mehrotra]{DRORahimian_2022}
Hamed Rahimian and Sanjay Mehrotra.
\newblock Frameworks and results in distributionally robust optimization.
\newblock \emph{Open Journal of Mathematical Optimization}, 2022.

\bibitem[R{\'e}nyi(1961)]{Rnyi1961OnMO}
Alfr{\'e}d R{\'e}nyi.
\newblock On measures of entropy and information.
\newblock 1961.

\bibitem[Rockafellar(1970)]{rockafellar1970convex}
RT~Rockafellar.
\newblock Convex analysis.
\newblock \emph{Princeton Mathematical Series}, 1970.

\bibitem[Rockafellar(2007)]{Rockafellar2007CoherentAT}
Terry Rockafellar.
\newblock Coherent approaches to risk in optimization under uncertainty.
\newblock 2007.

\bibitem[Rothblum \& Yona(2021)Rothblum and Yona]{rothblum2021multi}
Guy~N Rothblum and Gal Yona.
\newblock Multi-group agnostic pac learnability.
\newblock In \emph{International Conference on Machine Learning}, 2021.

\bibitem[Sagawa* et~al.(2020)Sagawa*, Koh*, Hashimoto, and Liang]{Sagawa*2020Distributionally}
Shiori Sagawa*, Pang~Wei Koh*, Tatsunori~B. Hashimoto, and Percy Liang.
\newblock Distributionally robust neural networks.
\newblock In \emph{International Conference on Learning Representations}, 2020.

\bibitem[Serraino \& Uryasev(2013)Serraino and Uryasev]{Serraino2013}
Gaia Serraino and Stanislav Uryasev.
\newblock \emph{Conditional Value-at-Risk (CVaR)}.
\newblock Springer US, 2013.

\bibitem[Vaicenavicius et~al.(2019)Vaicenavicius, Widmann, Andersson, Lindsten, Roll, and Sch\"{o}n]{pmlr-v89-vaicenavicius19a}
Juozas Vaicenavicius, David Widmann, Carl Andersson, Fredrik Lindsten, Jacob Roll, and Thomas Sch\"{o}n.
\newblock Evaluating model calibration in classification.
\newblock In \emph{International Conference on Artificial Intelligence and Statistics}, 2019.

\bibitem[Valiant(1984)]{valiant1984theory}
Leslie~G Valiant.
\newblock A theory of the learnable.
\newblock \emph{Communications of the ACM}, 1984.

\bibitem[Villani(2009)]{Villani2009}
C{\'e}dric Villani.
\newblock \emph{The Wasserstein distances}.
\newblock Springer Berlin Heidelberg, Berlin, Heidelberg, 2009.

\bibitem[Zhang et~al.(2024)Zhang, Zhan, Chen, Du, and Lee]{zhang2024optimal}
Zihan Zhang, Wenhao Zhan, Yuxin Chen, Simon~S Du, and Jason~D Lee.
\newblock Optimal multi-distribution learning.
\newblock In \emph{Conference on Learning Theory}, 2024.

\bibitem[Zhao et~al.(2021)Zhao, Kim, Sahoo, Ma, and Ermon]{zhao2021calibrating}
Shengjia Zhao, Michael Kim, Roshni Sahoo, Tengyu Ma, and Stefano Ermon.
\newblock Calibrating predictions to decisions: A novel approach to multi-class calibration.
\newblock \emph{Advances in Neural Information Processing Systems}, 2021.

\end{thebibliography}
\bibliographystyle{tmlr}

\appendix
\section{Calibration}\label{sec:appendix-results}
In this section, we give general background on calibration and loss minimization. We continue the notation from the main text. 
\subsection{Loss Minimization, Calibration, and Grouping}\label{sec:loss-minimisation-calibration-and-grouping}

Ideally, the goal in learning is to estimate the Bayes predictor $h^{*}\left(\vx\right) = P\left(\ry \ \vert \ \rvx = \vx\right) \in \Delta^{|\mathcal{Y}|}$ where the $i^{\text{th}}$ coordinate in this $|\mathcal{Y}|$-dimensional vector gives denotes $P\left(\ry = y_{i} \ \vert \ \rvx=\vx\right)$. However, this is generally difficult due to statistical, computational, and philosophical reasons, especially when $\vx$ is high-dimensional. In the absence of $h^{*}\left(\vx\right)$, one reasons probabilistically about $\vx$ via a group membership for $\vx$ in some group $G_{p}$ defined by some property $p$. Thus, instead of defining an \textit{individual} (conditional) probability for $\vx$, one considers a collection of inputs $\vx$ that are \textit{indistinguishable} according to some pre-specified property $p$, as $G_{p}: \{\vx \in \mathcal{X} \ \vert \ p\left(\vx\right)=1\}$. Having defined the grouping $G_{p}$ (or the reference class), the goal is to define (conditional) probability as $P\left(\ry \ \vert \ \vx \in G_{p}\right)$. The indistinguishable term underscores the inherent independence notion: for $\vx_1 \neq \vx_2$, conditional on the group membership $G_{p}$, i.e. $\vx_1, \vx_2 \in G_{p}$, the exact difference between $\vx_1$ and $\vx_2$ is not important, they are \textit{indistinguishable} according to the property $p$. There are a multitude of ways to define a grouping operator \citep{holtgen2023richness}, and it borrows with it the same issues as the classical reference class problem \citep{Hajek2007-HJETRC}. However, a major focus has been the default grouping induced by risk minimization which is defined by the prediction itself. Given the prediction $\nu$, $G_{\nu} = \{\vx \in \mathcal{X} \ | \ h\left(\vx\right) = \nu\}$. Then calibration says that $P\left(\ry \ \vert \ \ \vx \in G_{\nu}\right) = \nu$. The calibration-refinement decomposition for a proper scoring loss $\ell$ then says that loss-minimization leads to picking a predictor $h$ that is calibrated (as per this grouping definition) and well-refined. 

\subsection{Calibration and Confidence Thresholding Rule}\label{sec:calibration-thresholding-rule}
As stated in Proposition \ref{prop:calibration-and-decision-making-regular-calibration}, calibrated predictors can be reliably used to define a decision-rule based on the expected cost minimization rule. In this section, we give example that in the binary cases, when the action space $\mathcal{A}$ is binary, then the decision rule becomes thresholding the predictor's confidence based on a threshold defined the cost function $c: \mathcal{A} \times \mathcal{Y} \rightarrow \mathbb{R}_{+}$. This is practically useful in risk prediction scenarios, for example. Given a medical facility using some disease risk prediction system $h$ to help order some expensive test for further diagnosis, the action space $\mathcal{A}$ in this case is $\{\text{order test, not order test}\}$, and the cost function signal expenses in conducting the test which the healthcare facility wants to minimize. In such a setting, a cost function can be define by constants $c_{\text{TP}}, c_{\text{TN}}, c_{\text{FP}}, c_{\text{FN}}$ where $\text{TP}, \text{TN}, \text{FP}, \text{FN}$ denotes true positive, true negative, false positive, false negative, in order. Given a predictive system $h\left(\vx\right)$, the expected cost for action $a_1$ (e.g., order test) is $P\left(\ry = 1 \ \vert h\left(\vx\right) = \nu\right)c_{\text{TP}} + P\left(\ry = 0 \ 
 \vert \ h\left(\vx\right) = \nu \right) c_{\text{FP}}$, which due to calibration is equal to $\nu \cdot c_{\text{TP}} + (1-\nu)\cdot c_{\text{FP}}$. Similarly, the expected cost for action $a_2$ (e.g. not order test) is $\nu \cdot c_{\text{FN}} + (1-\nu)\cdot c_{\text{TN}}$. If one wants to use the predictor to recommend further tests, then $\nu \cdot c_{\text{TP}} + (1-\nu)\cdot c_{\text{FP}} > \nu \cdot c_{\text{FN}} + (1-\nu)\cdot c_{\text{TN}}$, which leads to $\frac{\nu}{1-\nu} > \frac{c_{\text{TN}} - c_{\text{FP}}}{c_{\text{TP}} - c_{\text{FN}}}$. Thus, one can use the considered cost to define a threshold on the predicted confidence to recommend the action. In general, we have the following definition:

 \begin{definition}
 Given prediction space $\mathcal{Y} = \{0,1\}$, the action space $\mathcal{A} = \{a_1, a_2\}$, and the cost structure defined by constants $c_{\text{TP}}, c_{\text{TN}}, c_{\text{FP}}, c_{\text{FN}}$, for the calibrated predictor $h : \vx \mapsto [0,1]$, an optimal action amounts to thresholding as 
 $$a_1 \ \ \text{if } \  \mathbb{I}\{f\left(h\left(\vx\right)\right) > g\left(c_{\text{TP}}, c_{\text{TN}}, c_{\text{FP}}, c_{\text{FN}}\right)\}, \ \ a_2 \ \ \text{otherwise},
 $$
 where $f$ and $g$ are some functions. 
\end{definition}
Proposition \ref{prop:calibration-and-decision-making-regular-calibration} then says that the above decision-rule is optimal on-average across all the predictions. The above example also asserts the importance of calibration as it allows one to incorporate cost structure in a post-hoc manner, as opposed to the alternative way of incorporating it during learning via the framework of cost-sensitive learning \citep{10.5555/1642194.1642224}.

\section{Decision-Theoretic Property of Calibration}\label{sec:decision-theoretic-calibration-proofs-section}

\subsection{Proof of Proposition \ref{prop:calibration-and-decision-making-regular-calibration}}\label{sec:calibration-and-decision-making}
Given the space $\mathcal{X} \times \mathcal{Y}$ and distribution $P$ on it, a forecaster $h(\rvx=\vx)$, some cost function $c$, and an action space $\mathcal{A}$. Given $\rvx=\vx$, the decision maker has to take some action, hence would use the forecast $h\left(\rvx=\vx\right)$ to make a decision. The forecast is over $\mathcal{Y}$---the prediction space which is usually not equal to $\mathcal{A}$. The decision-maker then takes the action $a^{*}\left(\vx\right) = \arg\min_{a \in \mathcal{A}}\mathbb{E}_{\ry \sim h\left(\vx\right)}c\left(a, \ry\right)$. Call this decision rule $\delta: h\left(\rvx\right) \rightarrow \mathcal{A}$: it takes some forecast and returns the action $\delta\left(h\left(\rvx\right)\right) \mapsto a^{*}$. Now, the quality of the said decision-rule $\delta$ needs to be evaluated. $\delta$ takes a random forecast, and the cost is evaluated with respect to the random $\ry$: the cost incurred for some random $h\left(\rvx\right)$ and $\ry$ is $c\left(\delta\left(h\left(\rvx\right)\right), \ry\right)$. To fully evaluate it: $P\left[c\left(\delta\left(h\left(\rvx\right)\right), \ry\right)\right]$ where the expectation is taken with respect to everything that is random. From the definition of (full) calibration: $\mathbb{E}\left[\ry \ \vert \ h(\rvx) = \hat{p}\right] = \hat{p}$, the above evaluation becomes:

\begin{align*}
\begin{split}
    \mathbb{E}_{h(\vx) \sim h\left(\rvx\right)}\mathbb{E}_{\ry \sim \ry \vert h\left(\vx\right)}\left[c(\delta(h\left(\vx\right), \ry)\right] &= \mathbb{E}_{h\left(\vx\right) \sim h\left(\rvx\right)}\left[\sum_{y_i \in \mathcal{Y}}\underbrace{P(\ry=y_i \ \vert  \ h(\vx))}_{\text{$=h(\vx)_{i}$: calibration}}[c(\delta(h(\vx)), y_i)]\right] \\ &= \mathbb{E}_{h\left(\vx\right) \sim h\left(\rvx\right)}\left[\sum_{y_i \in \mathcal{Y}}h(\vx)_{i}[c(\delta(h(\vx)), y_i)]\right] \\ &= \mathbb{E}_{h(\vx) \sim h\left(\rvx\right)}\mathbb{E}_{\ry \sim h\left(\vx\right)}[c(\delta(h(\vx)), \ry)]
\end{split}
\end{align*}
Now based on how $\delta$ is defined: $\delta$ takes a prediction $h(\vx)$ and returns an action $a^{*}$ that takes the minimal value $\mathbb{E}_{\ry \sim h(\vx)}c(a, \ry)$, so for any other decision-rule, the inner expectation is at-least the value of this inner expectation. Since this holds for any $h(\vx) \sim h(\rvx)$, the monotonicity of expectation says that the decision rule $\delta$ is optimal. In words, take a prediction, and take optimal action according to it, and if you average out across all the predictions, you're being optimal for any utility function. The crucial thing is that it is on average across all the predictions---the decision rule $\delta$ is optimal on average across the predictions. Or assuming the randomness in $h(\rvx) \times \ry$ is derived from $P$ over $\rvx \times \ry$, then on average across the whole population.

\subsection{Proof of Proposition \ref{prop:multi-distribution-learning-calibration-and-decision-theoretic-property}}\label{sec:proof-MDL-calibration-and-decision-theoretic-property}

\begin{proposition}
    \textit{(Calibration and decision-making)}. Given $\mathcal{Q}$ and a predictor $h$ calibrated with respect to the $Q^{*} \in \mathcal{Q}$ with the maximum generalized entropy for a loss function $\ell$, a decision-rule $\delta : h\left(\rvx\right) \mapsto \arg\min_{a \in \mathcal{A}}\mathbb{E}_{\ry \sim h\left(\rvx\right)}\left[c\left(a, \ry\right)\right]$ with the action space $\mathcal{A}$ and a cost function $c: \mathcal{A} \times \mathcal{Y} \rightarrow \mathbb{R}_{+}$ is optimal in a worst-case sense over $\mathcal{Q}$ as long as the cost function $c$ is consistent with the loss function $\ell$.
\end{proposition}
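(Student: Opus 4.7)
The plan is to bootstrap the single-distribution decision-theoretic guarantee of Proposition \ref{prop:calibration-and-decision-making-regular-calibration} to the MDL setting by exploiting two structural facts: the saddle-point property from Section \ref{sec:MDL}, which makes $Q^{*}$ the worst-case distribution for $\ell$ over $\mathcal{Q}$, and the hypothesis that $c$ is consistent with $\ell$, which I will formalize as the statement that $Q^{*}$ also attains the worst-case expected cost under the rule $\delta$ over $\mathcal{Q}$. Pointwise compatibility between the two losses at the action level is already built into $\delta$, which by definition minimizes the expected $c$-cost under the forecast distribution $h(\rvx)$; the work is therefore entirely at the distribution level.

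First I would apply Proposition \ref{prop:calibration-and-decision-making-regular-calibration} with the base distribution instantiated as $Q^{*}$. Canonical calibration of $h$ with respect to $Q^{*}$ is the only hypothesis needed there, and it is given by assumption; hence for every alternative decision rule $\delta'$ we obtain $\mathbb{E}_{Q^{*}}[c(\delta(h(\rvx)), \ry)] \leq \mathbb{E}_{Q^{*}}[c(\delta'(h(\rvx)), \ry)]$. Second, I would invoke consistency to replace the left-hand side by $\sup_{Q \in \mathcal{Q}} \mathbb{E}_{Q}[c(\delta(h(\rvx)), \ry)]$, and bound the right-hand side from above by the trivial inequality $\mathbb{E}_{Q^{*}}[\,\cdot\,] \leq \sup_{Q \in \mathcal{Q}} \mathbb{E}_{Q}[\,\cdot\,]$. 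Chaining these yields $\sup_{Q \in \mathcal{Q}} \mathbb{E}_{Q}[c(\delta(h(\rvx)), \ry)] \leq \sup_{Q \in \mathcal{Q}} \mathbb{E}_{Q}[c(\delta'(h(\rvx)), \ry)]$ for every $\delta'$, which is precisely the claimed worst-case optimality.

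The hard part will be pinning down and defending the right definition of consistency of $c$ with $\ell$. A minimal working candidate is that the zero-sum game $(Q, \delta') \mapsto \mathbb{E}_{Q}[c(\delta'(h(\rvx)), \ry)]$ admits a saddle-point at $(Q^{*}, \delta)$, i.e.\ nature's best response under $c$ against the calibration-based rule $\delta$ is the same $Q^{*}$ that emerged from the $\ell$-game in Section \ref{sec:MDL}. A more operational alternative, closer to property-elicitation language, is that $c$ elicits the same property as $\ell$, so that the ranking of distributions in $\mathcal{Q}$ by the expected $c$-cost of $\delta$ coincides with their ranking by the generalized $\ell$-entropy $H_{\ell}(Q(\ry \mid \rvx))$; together with the identification $Q^{*} = \arg\max_{Q \in \mathcal{Q}} H_{\ell}(Q(\ry \mid \rvx))$ and the compactness of $\mathcal{Q}$ (assumed throughout Section \ref{sec:MDL}), this delivers the saddle-point form needed to execute the chain of inequalities above. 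I would state the proposition under the first, minimal form and remark that the second form is a sufficient condition in practice.
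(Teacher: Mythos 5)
Your proposal is correct and follows essentially the same route as the paper: apply Proposition \ref{prop:calibration-and-decision-making-regular-calibration} at $Q^{*}$ to get average-case optimality of $\delta$ there, then use the consistency hypothesis to identify $Q^{*}$ as the worst case for the cost game and chain $\sup_{Q}\mathbb{E}_{Q}[c(\delta(h(\rvx)),\ry)] = \mathbb{E}_{Q^{*}}[c(\delta(h(\rvx)),\ry)] \leq \mathbb{E}_{Q^{*}}[c(\delta'(h(\rvx)),\ry)] \leq \sup_{Q}\mathbb{E}_{Q}[c(\delta'(h(\rvx)),\ry)]$. Your saddle-point formalization of ``consistency'' is in fact slightly more explicit than the paper's (which phrases it as $Q^{*}$ also maximizing the generalized entropy with respect to $c$ and leaves the max-min to min-max passage implicit), but the substance is the same.
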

The argument builds upon the proof of proposition \ref{prop:calibration-and-decision-making-regular-calibration} in Section \ref{sec:calibration-and-decision-making} above.

In multi-distribution case, one considers a set   $\mathcal{Q}$ of distributions over $\mathcal{X} \times \mathcal{Y}$. Given a loss function $\ell$, it can be shown that multi-distribution learning is equivalent to learning with respect to the distribution $Q^{*} \in \mathcal{Q}$ that achieves the value $\sup_{Q \in \mathcal{Q}}\inf_{h \in \mathcal{H}}\mathbb{E}_{Q}\left[\ell\left(h\left(\rvx\right), \ry\right)\right]$---this value is termed as the maximum generalized entropy with respect to $\ell$. Or multi-distribution learning can be seen as a single-distribution learning with respect to $Q^{*}$, i.e. $\inf_{h \in \mathcal{H}}\mathbb{E}_{Q^{*}}\left[\ell\left(h\left(\rvx\right), \ry\right)\right]$. Assume the resulting $h$ is such that it is calibrated with respect to $Q^{*}$. For a decision maker with some cost function $c$, and the action space $\mathcal{A}$, the only available information to take the decision is the forecast $h\left(\vx\right)$ that is calibrated with respect to $Q^{*}$. This decision maker will then employ the same decision rule $\delta$: $\delta\left(h\left(\vx\right)\right) \mapsto \arg\min_{a \in \mathcal{A}}\mathbb{E}_{\ry \sim h\left(\vx\right)}c\left(a, \ry\right)$. Following from the single-distribution case, the decision rule $\delta$ is optimal in averages for $Q^{*}$, i.e. for any other decision rule $\delta^{'}$: $\mathbb{E}_{Q^{*}}\left[c\left(\delta\left(h\left(\rvx\right)\right), \ry\right)\right] \leq \mathbb{E}_{Q^{*}}\left[c\left(\delta^{'}\left(h\left(\rvx\right)\right), \ry\right)\right], \ \forall \delta^{'}$. Or $\delta = \arg\min_{\delta}\mathbb{E}_{Q^{*}}\left[c\left(\delta(h(\rvx)),\ry\right)\right]$. If we further assume that $Q^{*}$ also maximizes the generalized entropy with respect to the cost function $c$, under this assumption it can be said that $\delta$ achieves the value $\sup_{Q \in \mathcal{Q}}\min_{\delta}\mathbb{E}_{Q}\left[c\left(\delta(h(\rvx)),\ry\right)\right]$---a worst-case decision optimality against $\mathcal{Q}$.

\end{document}